\theoremstyle{plain}
\newtheorem{theorem}{Theorem}[section]
\newtheorem{proposition}[theorem]{Proposition}
\newtheorem{lemma}[theorem]{Lemma}
\newtheorem{corollary}[theorem]{Corollary}
\theoremstyle{definition}
\theoremstyle{remark}
\newcommand{\loss}{\mathcal{L}}
\definecolor{citeblue}{RGB}{0,0,180} 
\definecolor{citered}{RGB}{180,0,0} 
\definecolor{my_gold}{RGB}{200, 160, 0}
\definecolor{my_silver}{RGB}{150, 150, 150}
\newcommand{\gold}[1]{$\mathbf{#1}$}
\newcommand{\silv}[1]{{$#1$}}
\newcommand{\act}[2]{a_{\hspace{0.5mm} \scalebox{0.65}{\text{\fbox{#1}}} \hspace{0.5mm} \scalebox{0.65}{\text{\fbox{#2}}}}}
\newcommand{\actid}[1]{a_{\hspace{0.5mm} \scalebox{0.65}{\text{\fbox{#1}}}}}
\newcommand{\hrc}{\scalebox{0.65}{\fbox{HRC}} \hspace{1pt}}
\newcommand{\id}{\scalebox{0.65}{\fbox{ID}} \hspace{1pt}}
\newcommand{\boxalo}[1]{\hspace{1mm} \scalebox{0.65}{\fbox{#1}}}
\newcommand{\E}{\mathbb{E}}
\newcommand{\R}{\mathbb{R}}
\icmltitlerunning{Beyond Softmax: A Natural Parameterization for Categorical Random Variables}
\begin{document}

\twocolumn[
  \icmltitle{Beyond Softmax: A Natural Parameterization for Categorical Random Variables}

  \author{
    Alessandro Manenti\textsuperscript{\rm 1}, 
    Cesare Alippi\textsuperscript{\rm 1\,2}\\[.3em]
    \textsuperscript{\rm 1}
    Universit\`a della Svizzera italiana, IDSIA, Lugano, Switzerland.\\
    \textsuperscript{\rm 2} Politecnico di Milano, Milan, Italy.\\[.3em]
    \texttt{\{alessandro.manenti, cesare.alippi\}@usi.ch}
}
  
  \icmlsetsymbol{equal}{*}

  \begin{icmlauthorlist}
    \icmlauthor{Alessandro Manenti}{usi}
    \icmlauthor{Cesare Alippi}{usi,polimi}
  \end{icmlauthorlist}

  \icmlaffiliation{usi}{Universit\`a della Svizzera italiana, IDSIA, Lugano, Switzerland.}
  \icmlaffiliation{polimi}{Politecnico di Milano, Milan, Italy}

  \icmlcorrespondingauthor{Alessandro Manenti}{alessandro.manenti@usi.ch}

  \icmlkeywords{Machine Learning, ICML, Catnat, Softmax, Deep Learning, Information Geometry, Natural Gradient, Optimization, Categorical, Random Variables}

  \vskip 0.3in
]

\printAffiliationsAndNotice{}

\begin{abstract}
Latent categorical variables are frequently found in deep learning architectures. They can model actions in discrete reinforcement-learning environments, represent  categories in latent-variable models, or express relations in graph neural networks. Despite their widespread use, their discrete nature poses significant challenges to gradient-descent learning algorithms. While a substantial body of work has offered improved gradient estimation to improve training, we take a complementary approach. Specifically, we: 1) revisit the ubiquitous \textit{softmax} function and demonstrate its limitations from an information-geometric perspective; 2) replace the \textit{softmax} with the \textit{catnat} function, a function composed of a sequence of hierarchical binary splits; we prove that this choice offers significant advantages to gradient descent due to the resulting diagonal Fisher Information Matrix.
A rich set of experiments — including graph structure learning, variational autoencoders, and reinforcement learning — shows that the proposed function improves learning, yielding models with consistently higher test performance.
\textit{Catnat} is simple to implement and seamlessly integrates into existing codebases \footnote{Implementations of \textit{catnat} in PyTorch and JAX are available at {\small{\url{www.github.com/allemanenti/catnat-torch}}} and {\small{\url{www.github.com/allemanenti/catnat-jax}}}.}. Moreover, it remains compatible with standard training stabilization techniques and, as such, offers a better alternative to the \textit{softmax} function. 
\end{abstract}

\begin{figure}[!b]
\centering
\includegraphics[width=1.0\linewidth]{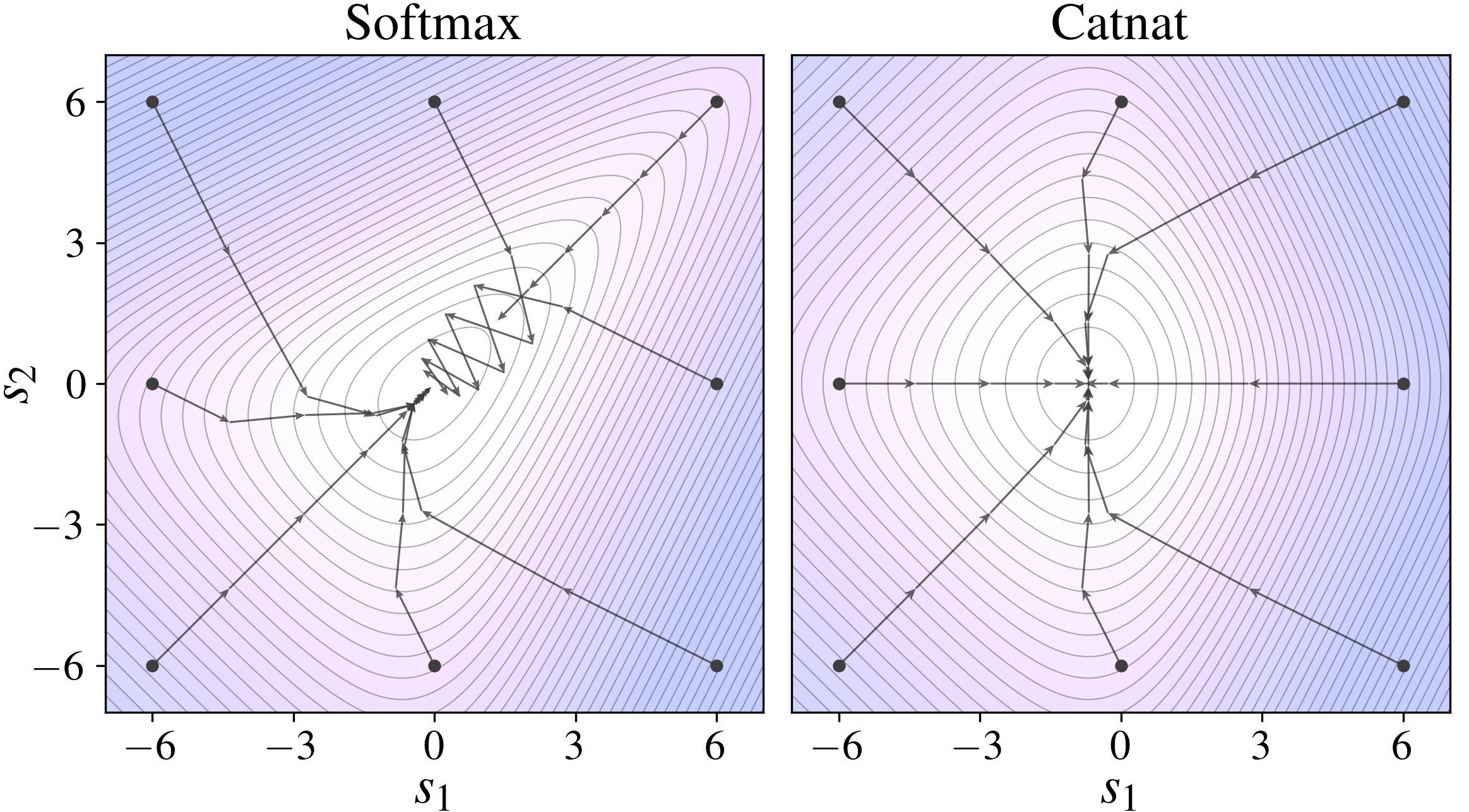}
\caption{Cross-entropy loss landscapes for \textit{softmax} and \textit{catnat} parameterizations in a three-class classification problem with a uniform target distribution. The \textit{catnat} landscape is smoother and more regular, yielding more direct gradient-descent trajectories (black lines). For \textit{softmax}, the plot shows only a two-dimensional slice of its three-parameter landscape.}
\label{fig:loss-landscapes-with-gd-lines}
\end{figure}

\begin{figure*}[!t]
\centering
\includegraphics[width=0.95\linewidth]{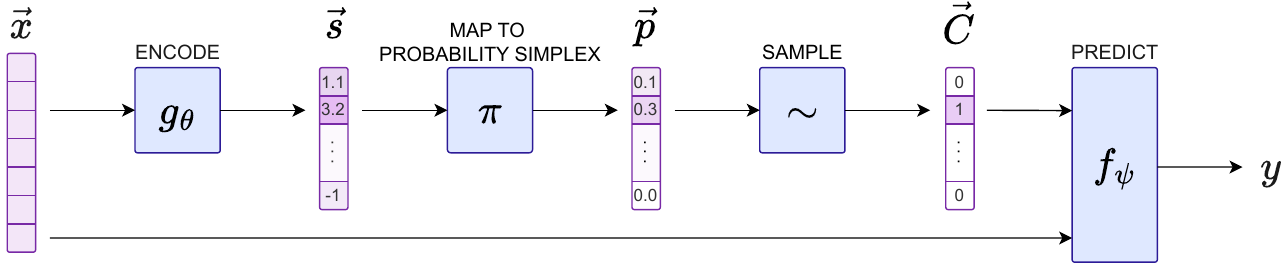}
\caption{Schematic depiction of a model with a single categorical latent random variable. For rigor, we show $\vec{C}$ as a one-hot vector; however, in some cases (e.g., when using the standard version of the Gumbel–Softmax trick) $\vec{C}$ may be a dense vector.}
\label{fig: system model}
\end{figure*}

\section{Introduction}

Categorical random variables — random variables that take one of a fixed set of values — are ubiquitous in machine learning. They are used to represent a wide range of concepts, including classes in a classification problem \citep{lecun2010mnist}, topics in a latent variable model \citep{miao2017discovering}, discrete actions in a reinforcement learning environment \citep{mnih2013playing}, the presence or absence of connections in a graph \citep{franceschi2019learning} and clusters in mixture models \citep{jacobs1991adaptive}.

The use of samples from categorical variables may be a modeling necessity or a choice dictated by scalability and efficiency. For instance, some problems are inherently discrete, such as selecting a word token in a language model \citep{chen2018learning, paulus2020gradient} or choosing an action in a reinforcement learning task \citep{mnih2013playing}. In other cases, discretization is used for practical reasons, such as scalability in sparse graph modeling \citep{cini2023sparse} or information compression using vector quantization in generative models \citep{van2017neural}.

In many of these settings, the categorical variables are latent, lacking a direct supervisory signal for training. The training signal must therefore be derived from an auxiliary loss function on a downstream task. While low-variance unbiased gradient estimators can be constructed for some continuous latent variables using techniques like the pathwise gradient estimator \citep{pflug2012optimization, kingma2013auto}, the same methods are often not applicable to the discrete case. 
Consequently, learning often suffers from high-variance or biased gradient estimates, which can lead to unstable training runs that fail to converge to a satisfactory solution \citep{peters2006policy}. As a result, improving the training stability of models with latent categorical random variables remains an active area of research with potential for broad impact \citep{mohamed2020monte, huijben2022review, ahmed2023simple}.

Most techniques developed to stabilize the training of latent categorical distributions focus on reducing the variance of the gradient estimator. This is typically achieved by introducing novel control variates \citep{gu2016muprop, tucker2017rebar}, employing different sampling strategies \citep{kool2020estimating}, or designing new gradient estimators \citep{niepert2021implicit}. In this work we explore a complementary perspective: \emph{we improve training effectiveness by changing the function that parameterizes the categorical distributions} within an information geometry-based framework. The modification we propose is simple to implement, can be easily integrated into existing codebases and is compatible with other training stabilization techniques.

To the best of our knowledge, this is the first work to use results from information geometry \citep{rao1945information, amari1998natural} to study the \textit{softmax} parameterization and replace it with a function with better theoretical properties. Specifically, we observe that the standard \textit{softmax} function has a dense Fisher Information Matrix (FIM), which induces geometric distortions in the parameter space. We therefore propose replacing it with a function designed to produce an optimization landscape more amenable to gradient-descent-based algorithms. This new parameterization -- a series of hierarchical binary decisions that we call \textit{catnat} -- yields a diagonal FIM. This diagonal structure substantially reduces geometric distortions, allowing the optimizer to follow a more direct and stable path to a solution.

Figure~\ref{fig:loss-landscapes-with-gd-lines} provides visual intuition in a simplified setting for the effect of a diagonal FIM on the optimization landscape. The diagonal FIM induced by \textit{catnat} yields a more regular landscape than \textit{softmax}, resulting in more direct gradient-descent trajectories, whereas \textit{softmax} produces more oscillatory behavior. See Appendix~\ref{app:intuition_nat_grad} for a discussion of the optimization benefits of a diagonal FIM.

Through extensive experiments in diverse settings -- Graph Structure Learning (GSL), Variational Autoencoders (VAEs), and Reinforcement Learning (RL) -- we empirically show that the proposed modification enables models to converge to solutions with superior final performance.

Implementations of \textit{catnat} in PyTorch and JAX are available at {\small{\url{www.github.com/allemanenti/catnat-torch}}} and {\small{\url{www.github.com/allemanenti/catnat-jax}}}, respectively.

\section{Problem Formulation}
We consider models that employ a set of latent categorical variables to solve a downstream task. A pipeline general enough to include many deep learning models can be described as follows.
(a) Given an input $x\in\mathcal{X}$, a neural network $g_\theta$ maps $x$ to a vector of unnormalized scores $\vec{s}\in\mathbb{R}^S$. (b) These scores are transformed by a function $\pi:\mathbb{R}^S\!\to\!\Delta^{K-1}$ into a valid categorical probability vector $\vec{p}$ lying in the $(K-1)$-dimensional probability simplex $\Delta^{K-1}:=\{\vec{p}\in\mathbb{R}^K_{\ge 0}:\sum_{k=1}^K p_k=1\}$. (c) A latent categorical variable $\vec{C}$ is then sampled according to $\mathrm{Cat}(p_1,\dots,p_K)$ and (d) used, together with $x$, by a task-specific predictor $f_\psi$ to produce the output $y$:
\begin{align}\label{eq: system model}
\nonumber &(a)\hspace{3mm} \vec{s} = g_\theta(x),\\
\nonumber &(b)\hspace{3mm}\vec{p} = \pi(\vec{s}) \quad \text{with}\quad \vec{p}=[p_1,\dots,p_K],\\
\nonumber &(c)\hspace{3mm}\vec{C} \sim \mathrm{Cat}(p_1,\dots,p_K),\\
&(d)\hspace{3mm}y = f_\psi(x,\vec{C}),
\end{align}

A training signal is derived from $y$ using a task-dependent objective (e.g., a supervised loss or a reinforcement-learning reward), and the model parameters $(\theta,\psi)$ are learned by gradient-based optimization of the corresponding expected objective. We present the overall architecture in Figure~\ref{fig: system model}, with results naturally extending to more sophisticated architectural variants.

Depending on the application, $g_\theta$, $f_\psi$, or both may be simple neural networks as in VAEs, or compositions of parametric and non-parametric components as in RL. While we focus on a single categorical latent variable for clarity, the formulation and all subsequent results extend directly to collections of categorical variables with potentially different cardinalities.

\begin{figure*}[!h]
    \centering
    \includegraphics[width=.7\linewidth]{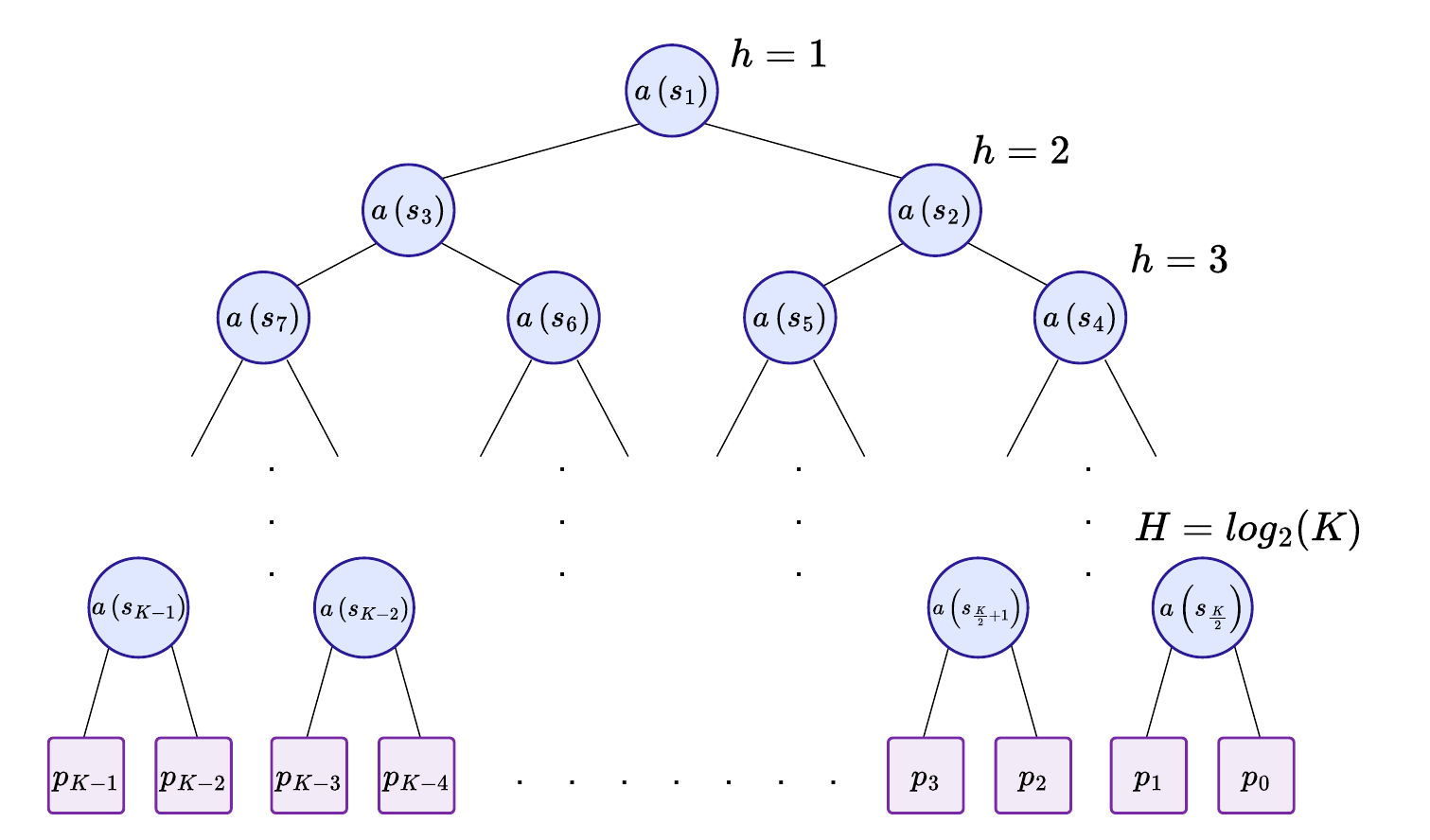}
    \caption{\textit{Catnat} parameterization for the categorical distribution. Given unnormalized scores $s_i$ and activation function $a$, blue nodes compute the probability of going left ($a(s_i)$) or right ($1-a(s_i)$). Final categorical probabilities are shown in purple. On the right side, the hierarchy level $h$ is indicated. Note that indices for $p$ start from zero, while indices for $s$ start from one.}
    \label{fig: hierarchical parameterization}
\end{figure*}

\section{Related Works and Preliminaries}\label{sec: related works}

\paragraph{Learning Categorical Variables}
The most common unbiased gradient estimator is the Score Function, or REINFORCE, gradient estimator \citep{williams1992simple}. While unbiased, it suffers from high variance. This variance can be reduced using control variates \citep{ross2006simulation} by subtracting a baseline from the learning signal. Simple baselines can be constructed by sampling the random variable multiple times, at the cost of introducing non-negligible computational overhead, or they can be estimated as a moving average from previous computations \citep{Kool2019Buy4R}. More advanced control variates can be built efficiently using a neural network \citep{mnih2014neural, grathwohl2018backpropagation}, by employing a Taylor expansion of the mean-field network's loss function \citep{gu2016muprop}, or by using a low-variance biased estimate of the loss \citep{tucker2017rebar}. Other gradient estimators can reduce variance at the expense of a biased gradient estimate by using a continuous relaxation of one-hot vectors, as in the Gumbel-Softmax \citep{jang2017categorical, maddison2017concrete, huijben2022review}, or by directly using mean-field gradients as a surrogate \citep{bengio2013estimating}. In the same class of biased estimators are MAP-based estimators \citep{niepert2021implicit, minervini2023adaptive} that derive a gradient signal from the change in the MAP estimate in response to perturbations of the distribution's parameters.

Instead of changing the gradient estimator, another interesting line of research focuses on sampling techniques to reduce the variance of the estimator \citep{titias2015local}. For example, \citet{liu2019rao} propose to exactly compute the contribution of high probability components and to estimate the rest with an unbiased estimator while \citet{kool2020estimating} propose to sample without replacement and then unbias the estimate to avoid duplicate samples. Often these techniques can often be interpreted as Rao-Blackwellizations \citep{mood1974introduction} of simpler estimators. We remark that all the aforementioned techniques are orthogonal to our work and can be readily combined with it.

\paragraph{Information Geometry \& Natural Gradient}  Ordinary gradient descent assumes an Euclidean geometry of the parameter space. In \citet{amari1998natural} and \citet{amari1998natural_2} the authors recognized that the parameter space of many learning models is not Euclidean and equal changes\footnote{measured by some kind of Euclidean norm.} in the parameter space can have disproportionate impacts on the model's output distribution. To address this, they proposed measuring the 'distance' between parameter settings through the dissimilarity of their induced distributions, assessed by their Kullback–Leibler divergence. For distributions $p(x|\theta)$ and $p(x|\theta + d\theta)$ close in the parameter space, the KL divergence can be approximated as:
\begin{equation}\label{eq: kl approx}
    D_{KL}(p(x|\theta) || p(x|\theta + d\theta)) \simeq \frac{1}{2}d\theta^T G(\theta) d\theta
\end{equation}
where $G(\theta)$ is the Fisher Information Matrix (FIM):
\begin{equation}\label{eq: FIM}
    G(\theta) = \E_{p(x|\theta)} \left[(\nabla_\theta \log p(x|\theta))\hspace{1mm} (\nabla_\theta \log p(x|\theta))^T \right]
\end{equation}
The FIM captures the local curvature of the statistical manifold \citep{amari2016information} and the natural gradient is defined as the direction of steepest descent in this Riemannian manifold. The natural gradient $\tilde{\nabla} \loss(\theta)$ is obtained by pre-conditioning the ordinary gradient with the inverse of the FIM:
\begin{equation}\label{eq: natural gradient approx}
    \tilde{\nabla} \loss(\theta) = G(\theta)^{-1} \nabla\loss(\theta)
\end{equation}
While theoretically advantageous, implementing natural gradient descent presents practical challenges. First, for each update step, computing Equation (\ref{eq: natural gradient approx}) requires calculating and inverting the FIM, which entails cubic scaling and can easily become a computational bottleneck. For this reason, different approximations have been proposed \citep{pascanu2013revisiting, grosse2016kronecker, amari2019fisher} to speed up computation at the expense of precision. As a second problem, Equations (\ref{eq: kl approx}) and (\ref{eq: natural gradient approx}) hold true for infinitesimal parameter changes, so their approximation error increases with larger, more practical step sizes commonly used during optimization. In \citet{martens2020new} natural gradient descent is analyzed from the perspective of a second-order optimization method, demonstrating that designing a robust natural gradient optimizer necessitates the incorporation of techniques such as trust regions and Tikhonov regularization. Furthermore, the FIM can be singular or create numerical instabilities during its inversion. In this work, we propose to tackle both problems by choosing a suitable parameterization for the categorical latent random variable that intrinsically produces a diagonal FIM. See Appendix~\ref{app:intuition_nat_grad} for further intuition on why a diagonal Fisher Information Matrix is desirable.

\paragraph{Hierarchical Classifiers and Catnat}  Hierarchical decompositions of categorical predictions have been used in multiclass classification \citep{morin2005hierarchical,mnih2008scalable,beygelzimer2009conditional} to reduce prediction or probability-estimation cost to $\mathcal{O}(\log(K))$.
In contrast, \textit{catnat} arises from information-geometric considerations, yielding a broader class of categorical parameterizations with favorable optimization properties.
Under this view, some existing hierarchical classifiers are special cases of \textit{catnat}, obtained by choosing particular binary activation functions $a(\cdot)$ at the internal nodes.

\section{Categorical Random Variables Parameterizations}

\subsection{The Softmax Function and its Pitfalls}
Let $\vec{s} \in \R^{1 \times K}$ be a set of scores. The \textit{softmax} function is defined as:
\begin{equation}\label{eq: softmax}
    p_i = \frac{e^{s_i}}{\sum_{k=1}^K e^{s_k}}
\end{equation}

Originally introduced in statistical mechanics as the Boltzmann distribution \citep{jaynes1957information}, the \textit{softmax} later appeared in statistics as the canonical link for categorical outcomes in Generalized Linear Models (GLM) \citep{nelder1972generalized}. In neural networks, the name \textit{softmax} was popularized by \citet{bridle1989training}, who also describes some of its appealing properties: it converts arbitrary real vectors into non-negative probabilities, preserves rank order, and offers a smooth approximation to the $\arg\max$. These features have made it the standard parameterization for categorical variables in machine learning. 

Despite its benefits, the \textit{softmax} function has non-negligible drawbacks. It is overparameterized, using $K$ parameters to represent a $(K-1)$-dimensional simplex, it can saturate, leading to vanishing gradients \citep{goodfellow2016deep} and, in highly nonlinear probabilistic models, the GLM assumptions behind its usefulness \citep{nelder1972generalized} may not hold.

We argue that Information Geometry \citep{amari1998natural, amari2016information} provides a principled framework for defining more suitable parameterizations. To this end, in Proposition \ref{prop: FIM for softmax} we analyze the geometric properties induced by the \textit{softmax} function.

\begin{proposition}\label{prop: FIM for softmax}
The Fisher Information Matrix for a categorical random variable parameterized by the \textit{softmax} function, as defined in (\ref{eq: softmax}), is
\begin{equation}\label{eq: FIM softmax}
G_{\textit{smx}}(s) =
\begin{bmatrix}
p_1(1-p_1) & -p_1p_2 & \cdots & -p_1p_K \\
-p_2p_1 & p_2(1-p_2) & \cdots & -p_2p_K \\
\vdots & \vdots & \ddots & \vdots \\
-p_Kp_1 & -p_Kp_2 & \cdots & p_K(1-p_K)
\end{bmatrix}
\end{equation}
\end{proposition}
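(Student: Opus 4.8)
The plan is to evaluate the Fisher Information Matrix directly from its definition in~(\ref{eq: FIM}), taking the score vector $\vec{s}$ as the parameter and exploiting the fact that the \textit{softmax} log-likelihood has an especially clean gradient. First I would write the log-likelihood of a single draw $\vec{C}=c$ from $\mathrm{Cat}(p_1,\dots,p_K)$ as
$$\log p(c\mid\vec{s}) = \log p_c = s_c - \log\sum_{k=1}^{K} e^{s_k},$$
using the definition~(\ref{eq: softmax}). Differentiating with respect to $s_j$, the first term yields $\mathbbm{1}[c=j]$ and the log-normalizer yields $-e^{s_j}/\sum_k e^{s_k} = -p_j$, so that
$$\frac{\partial}{\partial s_j}\log p(c\mid\vec{s}) = \mathbbm{1}[c=j] - p_j,$$
i.e.\ the score vector is $\nabla_{\vec{s}}\log p(c\mid\vec{s}) = \vec{e}_c - \vec{p}$, where $\vec{e}_c\in\R^K$ is the one-hot vector for category $c$. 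A useful sanity check is that this score has zero mean under $\vec{C}$, since $\E[\mathbbm{1}[\vec{C}=j]] = p_j$.

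Next I would substitute the score into~(\ref{eq: FIM}) and take the expectation over $\vec{C}\sim\mathrm{Cat}(\vec{p})$. The $(i,j)$ entry becomes $\E[(\mathbbm{1}[\vec{C}=i]-p_i)(\mathbbm{1}[\vec{C}=j]-p_j)]$, which equals $\Cov(\mathbbm{1}[\vec{C}=i],\mathbbm{1}[\vec{C}=j])$ because the score is centered. Using $\mathbbm{1}[\vec{C}=i]^2 = \mathbbm{1}[\vec{C}=i]$ and $\mathbbm{1}[\vec{C}=i]\,\mathbbm{1}[\vec{C}=j] = 0$ whenever $i\neq j$, the relevant cross-moment collapses to $\E[\mathbbm{1}[\vec{C}=i]\,\mathbbm{1}[\vec{C}=j]] = p_i\delta_{ij}$. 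Hence $G_{ij} = p_i\delta_{ij} - p_ip_j$, which is $p_i(1-p_i)$ on the diagonal and $-p_ip_j$ off it, matching~(\ref{eq: FIM softmax}).

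I do not expect a genuine obstacle here: once the score function is in hand the remainder is a short moment computation. The only place that warrants care is the expectation step — specifically, recognizing that products of category indicators over a single draw satisfy $\E[\mathbbm{1}[\vec{C}=i]\,\mathbbm{1}[\vec{C}=j]] = p_i\delta_{ij}$, and confirming that the score is centered so that the outer-product expectation in~(\ref{eq: FIM}) coincides with the covariance. As an aside, the resulting matrix can be written compactly as $\mathrm{diag}(\vec{p}) - \vec{p}\,\vec{p}^{\,T}$, which makes manifest that it is singular (it annihilates the all-ones vector $\vec{1}$, since $\vec{p}^{\,T}\vec{1}=1$); this is the geometric footprint of the \textit{softmax} overparameterization noted earlier, and it motivates the search for an alternative parameterization with a better-conditioned FIM.
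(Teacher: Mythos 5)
Your proposal is correct and follows essentially the same route as the paper's proof in Appendix~\ref{app: proof of FIM for softmax}: compute the score $\partial_{s_j}\log p(c\mid \vec{s}) = \mathbbm{1}[c=j]-p_j$, then take the expectation of the outer product over $\vec{C}\sim\mathrm{Cat}(\vec{p})$, where your indicator-moment identity $\E[\mathbbm{1}[\vec{C}=i]\,\mathbbm{1}[\vec{C}=j]]=p_i\delta_{ij}$ is just a repackaging of the paper's direct expansion of $\sum_k p_k(\delta_{ki}-p_i)(\delta_{kj}-p_j)$. Your closing observation that $G_{\textit{softmax}}(s)=\mathrm{diag}(\vec{p})-\vec{p}\,\vec{p}^{\,T}$ is singular with null vector $\vec{1}$ is a nice addition not spelled out in the paper, though not needed for the result.
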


We provide a proof of the proposition in Appendix~\ref{app: proof of FIM for softmax}.

The resulting Fisher Information Matrix is dense, with off-diagonal entries $-p_i p_j$ that couple all scores. Consequently, the statistical manifold is curved, and, as discussed in Section \ref{sec: related works} and by \citet{amari1998natural_2, amari2016information}, gradient-based optimization becomes less accurate. To address this, we introduce a class of parameterizations designed to induce a flatter statistical manifold.

\subsection{Catnat: A Class of Natural Parameterizations}\label{sec: catnat}

\subsubsection{Catnat Definition}

In this section, we propose a class of parameterizations for categorical random variables that, as demonstrated in Theorem \ref{th: FIM for hierarchical}, yield a diagonal Fisher Information Matrix. We refer to this class as \textit{catnat}, as it parametrizes the categorical distribution in accordance with natural gradient principles. By analyzing the general form of the FIM, we further identify and select the parameterization with the minimal number of factors in the diagonal terms.

The proposed class models the categorical probability distribution as the outcome of a sequence of binary decisions, structured as a hierarchical tree. Each unnormalized score $s_i$ corresponds to a unique node in this tree. To ensure that the resulting probabilities lie in the interval $[0,1]$, an activation function $a : \R \rightarrow [0,1]$ is applied to each score. Figure \ref{fig: hierarchical parameterization} illustrates this construction.

\begin{figure}[t]
    \centering
    \begin{subfigure}[t]{0.45\linewidth}
        \centering
        \includegraphics[width=\linewidth]{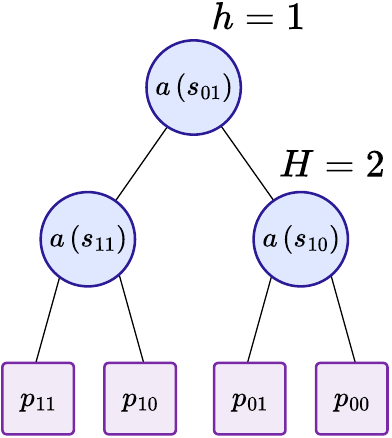}
        \caption{\textit{catnat} parameterization for a categorical distribution with $K=4$ classes.}
        \label{fig: hierarchical parameterization example}
    \end{subfigure}
    \hfill
    \begin{subfigure}[t]{0.45\linewidth}
        \centering
        \includegraphics[width=\linewidth]{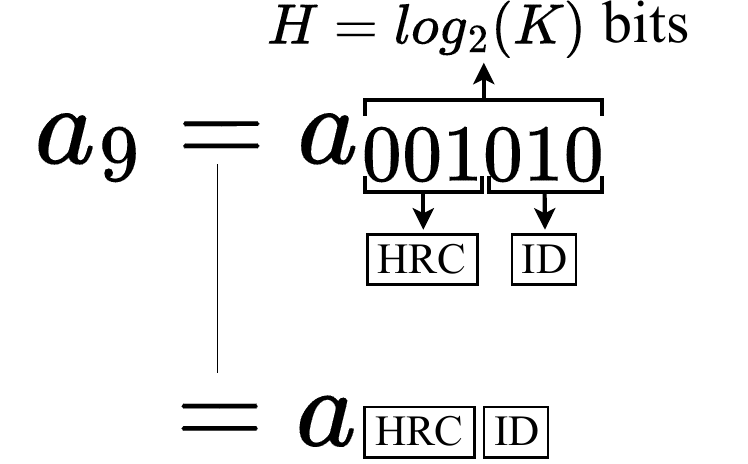}
        \caption{Binary representation of a node for $K=64$.}
        \label{fig: a binary}
    \end{subfigure}
    \caption{Simple examples for the \textit{catnat} parameterization.}
\end{figure}

Each score $s_i$, its corresponding binary probability $a_i := a(s_i)$, and the final categorical probabilities $p_k$ are associated with unique indices. To ease the theoretical analysis, we express those indices in their binary representation that, for a categorical distribution over $K$ classes, can be represented by a string of length $H=\log_2(K)$ bits. See Figure \ref{fig: hierarchical parameterization example} for an example with $K=4$.

For Bernoulli probabilities $a_i$ and scores $s_i$, we introduce a convenient notation by splitting the binary string into two sequences: \hrc and \id. Given the hierarchy level $h$ of $a_i$ in the tree, \hrc consists of $h-1$ zeros followed by a one, while \id specifies the position of the node at that level. For example, in Figure \ref{fig: a binary}, \hrc shows that $a_9$ is at hierarchy level $h=3$, and \id $= 010$ identifies it as the second node from the right.

Since \hrc is uniquely determined once \id and $K$ are given, we drop \hrc and write, for a node at hierarchy $h$:
\begin{equation}\label{eq: bernoulli probability}
    \act{HRC}{ID} = \actid{ID} = \actid{$b_1$, ..., $b_{h-1}$}. 
\end{equation}
The introduced notation allows for a compact representation of the categorical probabilities. Specifically, the probability $p_k$ of category $k$ identified by the binary string $\vec{b} = [b_1, ..., b_H]$ is:
\vspace{-2mm}
\begin{equation}\label{eq: category probability}
    p_{\vec{b}} = p_{b_1, ..., b_H} = \prod_{h=1}^{H} \left(\actid{$b_1$, ...,$b_{h-1}$}\right)^{b_h}\left(1-\actid{$b_1$, ...,$b_{h-1}$}\right)^{1-b_h}
\end{equation}
At the root node ($h=1$), the path represented by the set ${b_1, \ldots, b_{h-1}}$ is empty, consistent with the fact that the node is uniquely identified by its hierarchy level alone. By construction, the probability of descending from the root to a node $a_i$ is:
\begin{multline}
   P\left(a_i \right) = P\left({\actid{ID}}_i \right) = P\left({\actid{$b_1$, ..., $b_{h_i - 1}$}} \right) = \\
   = \prod_{h=1}^{h_i-1} 
   \left(\actid{$b_1$, ...,$b_{h-1}$}\right)^{b_h}
   \left(1-\actid{$b_1$, ...,$b_{h-1}$}\right)^{1-b_h}
\end{multline}

This probability is also equivalent to the sum of the probabilities of all leaf nodes $p_{\vec{b}}$ that descend from the node $a_i$:
\vspace{-1mm}
\begin{equation}
    P\left(a_i \right) = \sum_{\vec{b} \in \mathcal{D}_i} p_{\vec{b}} 
\end{equation}
\vspace{-2mm}
where $\mathcal{D}_i$ is the set of leaf nodes in the subtree rooted at $a_i$.

\subsubsection{Theoretical Advantages of Catnat}
The following theorem shows that the \textit{catnat} parameterization is better suited for gradient-based optimization than the \textit{softmax}.

\begin{theorem}\label{th: FIM for hierarchical}
    The Fisher Information Matrix $G_a(s)$ for the \textit{catnat} parameterization is:
    \begin{equation}\label{eq: FIM general}
    G_a(s)_{ij}=
    \begin{cases}
        0 \hspace{4.1cm}\text{ if } \hspace{0.5cm} i \not = j \\
         P\left(a_i \right) \left(\frac{\partial a_i}{\partial s_i}  \right)^2  \left( \frac{1}{a_i(1-a_i) }\right) \hspace{0.5cm}\text{ if } \hspace{0.5cm} i = j
    \end{cases}
    \end{equation}
\end{theorem}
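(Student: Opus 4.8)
The plan is to compute $G_a(s)$ directly from the definition (\ref{eq: FIM}), writing the expectation over the latent category as a finite sum over the $K$ leaves $\vec{b}=[b_1,\dots,b_H]$ of the tree, so that
\[
G_a(s)_{ij} = \sum_{\vec{b}} p_{\vec{b}}\,\frac{\partial \log p_{\vec{b}}}{\partial s_i}\,\frac{\partial \log p_{\vec{b}}}{\partial s_j}.
\]
Everything then rests on understanding the single-leaf score $\partial \log p_{\vec{b}}/\partial s_i$. Taking the logarithm of (\ref{eq: category probability}) turns the product over hierarchy levels into a sum, and the score $s_i$ enters only through the one factor attached to node $a_i$, and only for those leaves whose root-to-leaf path actually traverses $a_i$, i.e. for $\vec{b}\in\mathcal{D}_i$. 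So the first step is to record explicitly that $\partial\log p_{\vec{b}}/\partial s_i$ vanishes unless $\vec{b}\in\mathcal{D}_i$, and that on $\mathcal{D}_i$ it equals $\frac{1}{a_i}\frac{\partial a_i}{\partial s_i}$ when the path turns into the child with bit $1$ and $-\frac{1}{1-a_i}\frac{\partial a_i}{\partial s_i}$ when it turns into the child with bit $0$.

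For the off-diagonal case $i\neq j$, the product of the two scores is supported on $\mathcal{D}_i\cap\mathcal{D}_j$. Since any two nodes of a rooted tree have nested or disjoint subtrees, either $\mathcal{D}_i\cap\mathcal{D}_j=\emptyset$ (and $G_a(s)_{ij}=0$ trivially), or one subtree contains the other, say $\mathcal{D}_j\subseteq\mathcal{D}_i$. In the nested case the branch taken at $a_i$ is identical for every leaf of $\mathcal{D}_j$, so $\partial\log p_{\vec{b}}/\partial s_i$ is a constant $c_i$ over that set and factors out, leaving $c_i\sum_{\vec{b}\in\mathcal{D}_j} p_{\vec{b}}\,\partial\log p_{\vec{b}}/\partial s_j$. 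I would then recognize this as a localized expected score: because $\partial\log p_{\vec{b}}/\partial s_j$ is zero off $\mathcal{D}_j$, the sum extends to all leaves and equals $\sum_{\vec{b}}\partial p_{\vec{b}}/\partial s_j = \partial_{s_j}\!\left(\sum_{\vec{b}} p_{\vec{b}}\right)=\partial_{s_j}1=0$. Hence every off-diagonal entry vanishes.

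For the diagonal $i=j$, I would split $\mathcal{D}_i$ into the leaves descending through the bit-$1$ child and the bit-$0$ child of $a_i$, call them $\mathcal{D}_i^{(1)}$ and $\mathcal{D}_i^{(0)}$. Squaring the score gives $(\partial a_i/\partial s_i)^2/a_i^2$ on the first group and $(\partial a_i/\partial s_i)^2/(1-a_i)^2$ on the second, so
\[
G_a(s)_{ii} = \Big(\tfrac{\partial a_i}{\partial s_i}\Big)^2\!\Big[\tfrac{1}{a_i^2}\!\!\sum_{\vec{b}\in\mathcal{D}_i^{(1)}}\!\! p_{\vec{b}} + \tfrac{1}{(1-a_i)^2}\!\!\sum_{\vec{b}\in\mathcal{D}_i^{(0)}}\!\! p_{\vec{b}}\Big].
\]
The two inner sums are the probabilities of the children of $a_i$, namely $a_i\,P(a_i)$ and $(1-a_i)\,P(a_i)$, which follow from $P(a_i)=\sum_{\vec{b}\in\mathcal{D}_i}p_{\vec{b}}$ together with the single branching factor at $a_i$. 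Substituting and using $\frac{1}{a_i}+\frac{1}{1-a_i}=\frac{1}{a_i(1-a_i)}$ collapses the bracket to $P(a_i)/\big(a_i(1-a_i)\big)$, which is exactly (\ref{eq: FIM general}).

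I expect the main obstacle to be the bookkeeping around the binary-string indexing, precisely, verifying the support claim (which leaves lie in $\mathcal{D}_i$) and the fact that the branch taken at an ancestor is constant across a descendant's subtree, since these are what make the support restriction and the nested-or-disjoint dichotomy rigorous. The analytic content is comparatively light: the off-diagonal terms die by the zero-expected-score identity, and the diagonal follows from a one-line algebraic simplification.
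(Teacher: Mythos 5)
Your proposal is correct and follows essentially the same route as the paper's proof: the same support facts (the score derivative $\partial \log p_{\vec{b}}/\partial s_i$ vanishes unless $\vec{b}\in\mathcal{D}_i$, established in the paper via its ancestor/descendant lemmas), the same nested-or-disjoint dichotomy with the ancestor's branch constant across the descendant's subtree, and the identical children-probability computation for the diagonal entries. The only difference is cosmetic: where the paper finishes the off-diagonal case by explicitly splitting $\mathcal{D}_\beta$ over $b_{h_\beta}\in\{0,1\}$ and cancelling $P(a_\beta)\,a_\beta/a_\beta$ against $P(a_\beta)(1-a_\beta)/(1-a_\beta)$, you invoke the zero-expected-score identity $\sum_{\vec{b}} \partial p_{\vec{b}}/\partial s_j = \partial_{s_j} 1 = 0$, which is the same cancellation phrased slightly more slickly.
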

The proof is provided in Appendix \ref{app: proof of main theorem}. Theorem~\ref{th: FIM for hierarchical} shows that this class of parameterizations yields diagonal FIMs, thereby flattening the optimization manifold. The diagonal entries, $G_a(s)_{ii}$, depend on two components: the probability of reaching node $i$, $P(a_i)$, and a term involving the derivative of the chosen activation function. Since $P(a_i)$ is determined by the scores of all ancestor nodes, each diagonal entry depends on at most $H = \log_2(K)$ scores.
Furthermore, as the overall complexity of the FIM is governed by the choice of $a(s)$, we can further simplify Equation (\ref{eq: FIM general}). We propose the \textit{natural} activation function $\nu(x)$ to render the second component constant:
\begin{equation}\label{eq: natural activation}
\nu(x) = 
\begin{cases}
     0 \hspace{2.2cm}\text{ if } x \leq C-\frac{A}{2}\\
     \frac{1 + \sin\left(\frac{\pi(x - C)}{A}\right)}{2} \hspace{0.3cm}\text{ if } C - \frac{A}{2} \leq x \leq C+\frac{A}{2}\\
     1 \hspace{2.2cm}\text{ if } x \geq C + \frac{A}{2}
\end{cases}
\end{equation}

$C$ is a parameter that can be used to shift the function along the $x$ axis and to modify the categorical probabilities at initializations -- when it is reasonable to expect the scores to be distributed around zero. In the experiments we use $C=0$. Parameter $A$ can be changed to modify the slope of the function around $C$. In the experiments, to make a fair comparison between the natural activation $\nu$ and the sigmoid function $\sigma$ we set $A$ so that $\left. \frac{\partial \nu}{\partial s} \right\rvert_{s=0} = \left. \frac{\partial \sigma}{\partial s} \right\rvert_{s=0}$ resulting in $A = 2\pi$. Note that neither $A$ nor $C$ are additional hyperparameters to be tuned.

We term $\nu$ the \textit{natural} activation function as it simplifies the FIM in a way that aligns with the objectives of natural gradient methods, as demonstrated in Corollary \ref{corollary: FIM natural}.

\begin{corollary}\label{corollary: FIM natural}
    The Fisher Information Matrix $G_a(s)$ for the \textit{catnat} parameterization using the natural activation function $\nu$ is:
    \begin{equation}
    G_{\nu}(s)_{ij}=
    \begin{cases}
        0 \hspace{2.15cm}\text{ if } \hspace{0.2cm} i \not = j \\
         P\left(a_i \right)\left(\frac{\pi}{A}\right)^2 \hspace{0.5cm}\text{ if } \hspace{0.2cm} i = j \hspace{0.2cm} \text{ and } \hspace{0.2cm} |s_i-C_i| < \frac{A}{2}
    \end{cases}
    \end{equation}
    with the value for $i=j$ at $|s_i-C_i|=\frac{A}{2}$ defined by continuity.
\end{corollary}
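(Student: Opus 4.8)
The plan is to specialize Theorem~\ref{th: FIM for hierarchical} to the activation $a=\nu$ and verify that the activation-dependent factor collapses to a constant on the active region. By the theorem the off-diagonal entries vanish for any admissible $a$, so nothing remains to be done there; the entire task reduces to evaluating
\[
\left(\frac{\partial a_i}{\partial s_i}\right)^2 \frac{1}{a_i(1-a_i)}
\]
when $a_i = \nu(s_i)$ and $|s_i - C_i| < A/2$, i.e. on the open interval where $\nu$ (as defined in \eqref{eq: natural activation}) is given by its sine branch.

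First I would introduce the shorthand $u_i = \pi(s_i - C_i)/A$, so that on the active region $a_i = \tfrac{1}{2}\left(1 + \sin u_i\right)$. Differentiating gives $\partial a_i/\partial s_i = \tfrac{\pi}{2A}\cos u_i$, hence $\left(\partial a_i/\partial s_i\right)^2 = \tfrac{\pi^2}{4A^2}\cos^2 u_i$. Next I would invoke the complementary-probability identity $a_i(1-a_i) = \tfrac{1}{4}\left(1+\sin u_i\right)\left(1-\sin u_i\right) = \tfrac{1}{4}\cos^2 u_i$. Substituting both expressions into the bracketed factor, the $\cos^2 u_i$ terms cancel and I am left with $\pi^2/A^2 = (\pi/A)^2$, independent of $s_i$. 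Multiplying by $P(a_i)$ as prescribed by the theorem yields the claimed diagonal entry $G_\nu(s)_{ii} = P(a_i)(\pi/A)^2$.

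Finally, I would address the boundary $|s_i - C_i| = A/2$. There $\cos u_i = 0$, so both $\left(\partial a_i/\partial s_i\right)^2$ and $a_i(1-a_i)$ vanish and the ratio takes the indeterminate form $0/0$; however, since the ratio equals the constant $(\pi/A)^2$ throughout the open active region, its limit as $s_i \to C_i \pm A/2$ is again $(\pi/A)^2$, which is precisely the value assigned by continuity in the statement.

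I do not expect any genuine obstacle here: the only substantive observation is that the identity $a(1-a)=\tfrac{1}{4}\cos^2 u$, which makes the softmax Fisher factor awkward, is exactly what cancels the squared derivative under the sine parameterization — this is the design principle underlying $\nu$. The flat saturated branches $s_i \le C_i - A/2$ and $s_i \ge C_i + A/2$ lie outside the stated regime and require no treatment, though it is worth remarking that on them the node's Bernoulli outcome is deterministic, so the parameterization carries no Fisher information along $s_i$.
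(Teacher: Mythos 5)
Your proposal is correct and follows essentially the same route as the paper's proof: specialize the diagonal formula of Theorem~\ref{th: FIM for hierarchical} to $a=\nu$, compute $\left(\partial\nu/\partial s_i\right)^2 = \frac{\pi^2}{4A^2}\cos^2 u_i$ and $\nu(1-\nu)=\frac{1}{4}\cos^2 u_i$ via $1-\sin^2 u_i=\cos^2 u_i$, and cancel to obtain the constant $(\pi/A)^2$. Your explicit treatment of the $0/0$ boundary at $|s_i-C_i|=A/2$ and the remark on the saturated branches go slightly beyond the paper's computation, which handles the boundary only implicitly through the ``defined by continuity'' clause, but the argument is otherwise identical.
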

The corollary is proved in Appendix \ref{appendix: FIM natural}. The corollary shows that using the \textit{natural} activation eliminates the dependence of each diagonal entry $G_{\nu}(s)_{ii}$ on the local score $s_i$, leaving only the ancestor-dependent probability term $P(a_i)$.

\subsubsection{Catnat Simplifies the Global FIM}

The diagonal FIM in Theorem~\ref{th: FIM for hierarchical} is local to the categorical parameterization: it is defined with respect to the intermediate scores $\vec{s}$, not with respect to the network parameters $\theta$ that are optimized. It therefore does not imply that the FIM with respect to $\theta$ is diagonal -- and in general it is not. What it does guarantee is that all cross-score statistical coupling \emph{introduced by the categorical parameterization} disappears. The following derivation makes this precise.

By the standard transformation law of the Fisher metric under reparameterization \citep{amari2000methods, amari2016information}, the FIM with respect to $\theta$ is obtained by pulling back the score-space FIM through the Jacobian of $g_\theta$. Let $\vec{s}=g_\theta(x)\in\R^{S}$ and let $J_\theta(x) := \partial \vec{s}/\partial \theta$. For a categorical parameterization $\vec{p}=\pi(\vec{s})$, let $G_\pi(\vec{s})$ denote the FIM with respect to $\vec{s}$. Then, for a fixed input $x$,
\begin{equation}\label{eq:global_fim_chain_rule}
G_\theta(\theta;x)
=
J_\theta(x)^T G_\pi(\vec{s}) J_\theta(x)
=
\sum_{i,j}
G_\pi(\vec{s})_{ij}
\nabla_\theta s_i
(\nabla_\theta s_j)^T
\end{equation}

For \textit{catnat}, Theorem~\ref{th: FIM for hierarchical} gives $G_\pi(\vec{s})=G_a(\vec{s})$, where $G_a(\vec{s})$ is diagonal. The off-diagonal contributions in~\eqref{eq:global_fim_chain_rule} therefore vanish, leaving
\begin{equation}\label{eq:global_fim_catnat}
G_\theta^{\textit{catnat}}(\theta;x)
=
\sum_{i=1}^{S}
G_a(\vec{s})_{ii}
\nabla_\theta s_i
(\nabla_\theta s_i)^T .
\end{equation}
This does not make $G_\theta^{\textit{catnat}}$ diagonal: the Jacobian $J_\theta(x)$ can still couple network parameters. \emph{It does, however, remove every cross-score term induced by the categorical parameterization.} By contrast, the \textit{softmax} FIM is dense, with off-diagonal entries $-p_i p_j$, so its pulled-back FIM contains explicit cross-score interactions $\nabla_\theta s_i(\nabla_\theta s_j)^T$ for $i\neq j$.

\section{Experiments}
We evaluate four parameterizations for categorical latent random variables: the \textit{softmax} function, the \textit{sparsemax} function \citep{martins2016softmax}, the \textit{catnat} parameterization with \textit{sigmoid} activation, and the \textit{catnat} parameterization with \textit{natural} activation function. The evaluation spans three distinct domains that rely on such variables: Graph Structure Learning (GSL), Variational Autoencoders (VAE), and Reinforcement Learning (RL). These domains allow us to assess the proposed method under diverse conditions, varying factors such as the gradient estimator employed for the latent distribution parameters, the number of categories (\(K\)), the number of latent variables ($N$), the form of the loss or reward function and the downstream task considered. Empirical results show that both hierarchical parameterizations typically converge to better optima, with the proposed \textit{natural} activation function yielding superior performance in the majority of the cases.

\subsection{Graph Structure Learning} \label{sec: GSL experiment}
\begin{table}
    \centering
    \footnotesize
    \caption{ Different datasets are generated with different latent distributions. The latent distribution is determined by the Bernoulli probability $\theta^*$ of sampling edges from communities as in Fig. \ref{fig:community graph}.}
    \begin{tabular}{ccc}
        \toprule
        True Bernoulli & Binary entropy per \\ 
        probability $\theta^*$ & edge (shannons)\\
        \midrule
        0.1  & 0.47 \\ 
        0.25 & 0.81 \\ 
        0.5  & 1 \\ 
        0.75 & 0.81 \\
        0.9 & 0.47 \\ 
        \bottomrule
    \end{tabular}
\label{table: GSL trials}
\vspace{-4mm}
\end{table} 

Graph Neural Networks (GNNs) \citep{scarselli2008graph} are a class of models that leverage relational information, encoded in an adjacency matrix $A$, as an inductive bias to improve performance on various predictive tasks \citep{fout2017protein, shlomi2020graph}. Often, the optimal graph structure is not available and must be inferred from the data, a process known as Graph Structure Learning (GSL) \citep{kipf2018neural, franceschi2019learning, fatemi2021slaps}. In this context, the adjacency matrix $A$ is frequently treated as a collection of latent categorical random variables $\vec{C}$, where a Bernoulli random variable typically models the existence of each edge \citep{franceschi2019learning, elinas2020variational, zambon2023graph, cini2023sparse, manenti2025learning}.

\begin{table*}[t]
    \centering
    \footnotesize
    \caption{Test metrics of models trained on datasets generated with different true latent parameters $\theta^*$. ES, PP-MAE, and PP-MSE measure predictive performance, while MAE in $\theta$ evaluates calibration on the latent distribution parameters. For all metrics, lower values indicate better performance. \textbf{Bold} numbers indicate the best-performing models (p-value of the Welch’s t-test $< 0.05$). Optimal values are estimated using the true generating model.}
    \begin{tabular}{ccccccc}
        \toprule
        $\theta^*$ & Activation & ES loss & PP-MAE & PP-MSE & MAE on $\theta$ \\
        \midrule
        \multirow{5}{*}{0.1} 
            & \textit{sparsemax} & $7.467 \pm 0.013$ & \gold{0.3842 \pm 0.0012} & $0.624 \pm 0.002$ & $0.0083 \pm 0.0012$ \\
            & \textit{softmax}   & $7.450 \pm 0.015$ & \gold{0.3844 \pm 0.0012} & $0.622 \pm 0.003$ & $0.0077 \pm 0.0003$ \\
            & \textit{sigmoid}   & $7.459 \pm 0.018$ & \gold{0.3845 \pm 0.0010} & $0.623 \pm 0.003$ & $0.0090 \pm 0.0003$ \\
            & \textit{natural}   & \gold{7.425 \pm 0.014} & \gold{0.3837 \pm 0.0015} & \gold{0.617 \pm 0.004} & \gold{0.0052 \pm 0.0003} \\
        \cmidrule(lr){2-6}
            & \text{Optimal value} & $7.416 \pm 0.014$ & $0.3849 \pm 0.0009$ & $0.615 \pm 0.002$ & $0$ \\
        \midrule
        \multirow{5}{*}{0.25} 
            & \textit{sparsemax} & $10.928 \pm 0.023$ & $0.8224 \pm 0.0007$ & $1.321 \pm 0.007$ & $0.0123 \pm 0.0011$ \\
            & \textit{softmax}   & $10.882 \pm 0.019$ & $0.8224 \pm 0.0013$ & $1.312 \pm 0.004$ & $0.0083 \pm 0.0002$ \\
            & \textit{sigmoid}   & $10.922 \pm 0.010$ & $0.8230 \pm 0.0014$ & $1.319 \pm 0.003$ & $0.0086 \pm 0.0005$ \\
            & \textit{natural}   & \gold{10.859 \pm 0.012} & \gold{0.8201 \pm 0.0012} & \gold{1.304 \pm 0.003} & \gold{0.0051 \pm 0.0003} \\
        \cmidrule(lr){2-6}
            & \text{Optimal value} & $10.842 \pm 0.014$ & $0.8192 \pm 0.0012$ & $1.300 \pm 0.003$ & $0$ \\
        \midrule
        \multirow{5}{*}{0.5} 
            & \textit{sparsemax} & $15.030 \pm 0.030$ & $1.2592 \pm 0.0042$ & $2.510 \pm 0.020$ & $0.0126 \pm 0.0011$ \\
            & \textit{softmax}   & $14.990 \pm 0.020$ & $1.2596 \pm 0.0015$ & $2.485 \pm 0.007$ & $0.0132 \pm 0.0008$ \\
            & \textit{sigmoid}   & $15.020 \pm 0.015$ & $1.2607 \pm 0.0016$ & $2.490 \pm 0.007$ & $0.0101 \pm 0.0005$ \\
            & \textit{natural}   & \gold{14.937 \pm 0.023} & \gold{1.2537 \pm 0.0019} & \gold{2.466 \pm 0.007} & \gold{0.0061 \pm 0.0006} \\
        \cmidrule(lr){2-6}
            & \text{Optimal value} & $14.926 \pm 0.018$ & $1.2523 \pm 0.0019$ & $2.462 \pm 0.005$ & $0$ \\
        \midrule
        \multirow{5}{*}{0.75} 
            & \textit{sparsemax} & $10.979 \pm 0.102$ & $0.8278 \pm 0.0188$ & $1.343 \pm 0.047$ & $0.0137 \pm 0.0022$ \\
            & \textit{softmax}   & \gold{10.713 \pm 0.096} & $0.8044 \pm 0.0027$ & $1.278 \pm 0.002$ & $0.0146 \pm 0.0006$ \\
            & \textit{sigmoid}   & $10.750 \pm 0.019$ & $0.8078 \pm 0.0020$ & $1.280 \pm 0.004$ & $0.0119 \pm 0.0004$ \\
            & \textit{natural}   & \gold{10.674 \pm 0.012} & \gold{0.7969 \pm 0.0015} & \gold{1.267 \pm 0.004} & \gold{0.0043 \pm 0.0002} \\
        \cmidrule(lr){2-6}
            & \text{Optimal value} & $10.672 \pm 0.017$ & $0.7943 \pm 0.0013$ & $1.267 \pm 0.003$ & $0$ \\
        \midrule
        \multirow{5}{*}{0.9} 
            & \textit{sparsemax} & $7.592 \pm 0.136$ & $0.4399 \pm 0.0196$ & $0.651 \pm 0.044$ & $0.0102 \pm 0.0019$ \\
            & \textit{softmax}   & $7.374 \pm 0.012$ & $0.4272 \pm 0.0013$ & $0.614 \pm 0.002$ & $0.0108 \pm 0.0003$ \\
            & \textit{sigmoid}   & $7.398 \pm 0.017$ & $0.4228 \pm 0.0018$ & $0.616 \pm 0.003$ & $0.0085 \pm 0.0003$ \\
            & \textit{natural}   & \gold{7.340 \pm 0.015} & \gold{0.3973 \pm 0.0015} & \gold{0.607 \pm 0.003} & \gold{0.0023 \pm 0.0001} \\
        \cmidrule(lr){2-6}
            & \text{Optimal value} & $7.342 \pm 0.016$ & $0.3839 \pm 0.0014$ & $0.611 \pm 0.002$ & $0$ \\
        \bottomrule
    \end{tabular}
    \label{tab: GSL results}
\end{table*}

We adopt the experimental setup from \citet{manenti2025learning}, generating synthetic data with a Graph Neural Network (GNN), $f_{\psi^*}(x,A)$. This GNN computes an output $y^*$ from random input features $x$ and a latent graph $A$, which we sample from a multivariate Bernoulli distribution, $P_{\theta^*}(A)$. The ground-truth parameters $\theta_{ij}^*$ are set to the same non-zero value $\theta^*$ for edges forming the community structure depicted in Figure~\ref{fig:community graph} and are zero otherwise. We use this dataset to train a model with an identical architecture to recover the underlying graph structure and GNN parameters. 
We optimize the model using the Energy Score (ES) \citep{gneiting2007strictly} loss for its calibration advantages \citep{manenti2025learning}. We use the score function gradient estimator \citep{williams1992simple} with the LOO baseline to train the latent parameters. We provide additional details in Appendix~\ref{app: GSL dataset}. 

To compare the score parameterizations under different entropy settings we generate five datasets with different true latent parameters $\theta^*$. Experiment configurations are detailed in Table~\ref{table: GSL trials}.
The task in this setting is twofold: (i) to make optimal point predictions, measured for example by the Point Prediction Mean Absolute Error (PP-MAE) and Mean Squared Error (PP-MSE), and (ii) to learn the correct graph structure, i.e., to accurately estimate the true parameters $\theta^*$. The latter is evaluated, for example, by the mean absolute error on the distribution parameters (MAE on $\theta$), $\langle|\theta_{ij} - \theta_{ij}^*|\rangle$.

The experimental results in Table \ref{tab: GSL results} show that the \textit{natural} activation $\nu$ consistently outperforms the \textit{sparsemax}, \textit{sigmoid} and \textit{softmax} parameterization across all metrics and data-generating conditions, with the largest gains in learning the underlying latent distribution. In particular, the natural parameterization recovers the true data-generating parameters more accurately, as measured by the MAE on $\theta$. In terms of predictive performance, the natural parameterization also achieves lower mean scores on ES and both point prediction errors.

\subsection{Categorical VAE}

\begin{table*}[t]
    \centering
    \footnotesize
    \caption{Test set negative log likelihood on the MNIST dataset. Negative log-likelihoods are estimated with $512$ importance samples \citep{burda2016importance}. Models are compared across the number of categorical variables $N$, categories $K$, and categorical parameterizations. \textbf{Bold} denotes the best-performing models (p-value of the Welch’s t-test $< 0.05$) for each $(N,K)$ setting. Adam used for optimization.}
    \vspace{2mm}
    \begin{tabular}{cccccccc}
\toprule 
\multirow{2}{*}{$N$} & \multirow{2}{*}{Param.} & \multicolumn{3}{c}{MNIST} & \multicolumn{3}{c}{Binary MNIST} \\
\cmidrule(lr){3-5} \cmidrule(lr){6-8}
& & $K=8$ & $K=16$ & $K=32$ & $K=8$ & $K=16$ & $K=32$ \\
\midrule 
\multirow{4}{*}{10} & \textit{sparsemax}  & $102.5 \pm 0.3$ & $101.1 \pm 0.6$ & $101.3 \pm 0.8$ & $87.0 \pm 0.4$ & $83.4 \pm 0.9$ & $85.2 \pm 1.5$ \\ 
& \textit{softmax}  & $100.9 \pm 0.5$ & \gold{98.1 \pm 0.7} & $98.6 \pm 0.7$ & $84.9 \pm 0.8$ & $81.0 \pm 1.2$ & $79.9 \pm 0.5$ \\ 
& \textit{catnat} $\sigma$ & \gold{99.5 \pm 0.2} & \gold{97.7 \pm 0.4} & \gold{96.6 \pm 0.2} & \gold{83.0 \pm 0.6} & \gold{78.8 \pm 0.6} & \gold{76.9 \pm 0.7} \\ 
& \textit{catnat} $\nu$  & \gold{99.8 \pm 0.4} & \gold{97.6 \pm 0.2} & \gold{96.9 \pm 0.4} & \gold{83.2 \pm 0.5} & \gold{78.7 \pm 0.3} & \gold{77.3 \pm 0.4} \\ 
\midrule
\multirow{4}{*}{20} & \textit{sparsemax}  & $102.1 \pm 0.7$ & $102.1 \pm 0.8$ & $103.8 \pm 0.7$ & $84.3 \pm 0.8$ & $83.6 \pm 0.8$ & $85.1 \pm 0.6$ \\ 
& \textit{softmax}  & \gold{97.8 \pm 0.2} & \gold{97.5 \pm 0.5} & $98.2 \pm 0.8$ & $78.3 \pm 0.5$ & $78.1 \pm 0.4$ & $79.2 \pm 1.0$ \\ 
& \textit{catnat} $\sigma$  & \gold{97.5 \pm 0.3} & \gold{96.9 \pm 0.4} & \gold{97.0 \pm 0.3} & \gold{77.5 \pm 1.1} & \gold{76.7 \pm 0.7} & \gold{76.2 \pm 0.5} \\ 
& \textit{catnat} $\nu$  & \gold{97.7 \pm 0.2} & \gold{97.0 \pm 0.4} & \gold{96.9 \pm 0.4} & \gold{77.1 \pm 0.4} & \gold{76.6 \pm 0.3} & \gold{76.8 \pm 0.4} \\ 
\midrule
\multirow{4}{*}{30} & \textit{sparsemax}  & $105.3 \pm 0.5$ & $103.9 \pm 0.5$ & $104.9 \pm 0.8$ & $86.9 \pm 0.6$ & $87.2 \pm 1.0$ & $88.7 \pm 1.3$ \\ 
& \textit{softmax}  & $98.8 \pm 0.7$ & $98.8 \pm 0.9$ & $99.3 \pm 0.7$ & $79.0 \pm 0.5$ & $79.2 \pm 0.9$ & $80.6 \pm 0.6$ \\ 
& \textit{catnat} $\sigma$  & \gold{98.1 \pm 0.4} & \gold{97.6 \pm 0.4} & \gold{97.9 \pm 0.5} & \gold{77.9 \pm 0.7} & \gold{77.8 \pm 0.6} & \gold{77.9 \pm 1.1} \\ 
& \textit{catnat} $\nu$  & \gold{97.9 \pm 0.3} & \gold{97.6 \pm 0.3} & \gold{97.7 \pm 0.8} & \gold{77.9 \pm 0.6} & \gold{77.7 \pm 0.5} & \gold{78.0 \pm 0.7} \\ 
\bottomrule
    \end{tabular}
    \label{tab: vae results}
\end{table*}

Variational autoencoders (VAEs) \citep{kingma2013auto} constitute a class of deep generative models that learn compact latent representations of data via a probabilistic framework. The original VAE framework employs a continuous latent distribution, typically Gaussian, which may not suit data with inherently discrete factors \citep{van2017neural}. To address this limitation, VAEs have been extended to incorporate categorical variables $\vec{C}$, enabling more accurate modeling of such structures \citep{jang2017categorical, maddison2017concrete}. In this configuration, the latent space is parameterized by one or more categorical random variables.

We trained a variational autoencoder (VAE) with a discrete, categorical latent space on the MNIST dataset~\citep{lecun2010mnist} and on a binarized version of the same dataset, where pixels are thresholded at $0.5$ of their maximum intensity value \citep{akrami2022robust}. Its encoder, a Convolutional Neural Network, processes an input image $x$ to produce a tensor of unnormalized scores, $\vec{s} \in \mathbb{R}^{N \times K}$, which parameterize the approximate posterior distribution $q(\vec{C}|x)$. The latent space is structured as a composite of $N$ independent categorical variables, where each variable can assume one of $K$ distinct classes. To enable gradient flow through the discrete sampling process, we employ the Gumbel-Softmax trick \citep{jang2017categorical}, which generates differentiable sample tensors $\vec{C} \in [0, 1]^{N \times K}$. Note that the samples used by the Gumbel-Softmax trick are continuous relaxations of one-hot vectors. The latent samples $\vec{C}$ are then fed into the decoder, a corresponding transposed convolutional network \citep{zeiler2010deconvolutional}, to generate a reconstructed image $\hat{y}$.

To form the categorical distributions from the encoder's scores, we test four parameterization schemes: the \textit{softmax} function, the \textit{sparsemax} function, the \textit{catnat} function with \textit{sigmoid} activation function and the \textit{catnat} function with \textit{natural} activation. The model's training objective is the minimization of the Evidence Lower Bound (ELBO) \citep{kingma2013auto}. We defer additional experimental details to Appendix \ref{app: VAE}.

The results in Table \ref{tab: vae results} show a clear performance advantage for both \textit{catnat} parameterizations over both the \textit{softmax} and the \textit{sparsemax} across all experiments. This indicates that the benefits of the proposed parameterization are relevant in the probabilistic generative modeling case. Importantly, these improvements are observed across a wide range of latent configurations ($N \in \{10, 20, 30\}, K \in \{8, 16, 32\}$), underscoring the robustness of the approach to changes in model capacity and latent space complexity. 
Within the hierarchical methods, the natural activation function $\nu$ yields a slight improvement over the sigmoid function $\sigma$ in the majority of the settings, although the two are statistically equivalent overall. 

Since the proposed parameterization is grounded in Information Geometry and directly targets issues arising in gradient-based optimization, we run the same set of experiments using SGD instead of Adam as the optimizer. The results in Table \ref{tab: vae results SGD} (Appendix \ref{app: VAE SGD}) confirm that the performance patterns observed with Adam extend to other optimizers, with the \textit{catnat} parameterizations consistently outperforming the \textit{sparsemax} and \textit{softmax} baselines. 

We ran an additional experiment using a different implementation of a discrete VAE, based on the codebase of \citet{jeffares2025introduction}. Even in this setting, the best-performing models use the proposed \textit{catnat} parameterization and outperform the \textit{softmax} baseline. Additional details are provided in Appendix~\ref{app: vae new codebase}.

Overall, these results demonstrate that in practical scenarios, replacing the standard \textit{softmax} with the proposed \textit{catnat} parameterization facilitates optimization and improves downstream performance, in line with theoretical results.

\subsection{Reinforcement Learning}
Reinforcement learning (RL) is a framework where an agent learns to make sequential decisions by interacting with an environment in order to maximize a cumulative reward \citep{sutton1998reinforcement}. In policy-based approaches \citep{sutton1998reinforcement}, the agent’s strategy is directly parameterized by a policy \(\pi\), which maps observed states to actions. In many domains, such as board games or Atari video games \citep{bellemare2013arcade, mnih2013playing, mnih2015human}, the action space is discrete, requiring the agent to select from a finite set of choices at each step. In such settings, the policy produces a categorical distribution over the available actions.

In this setting, we employ the Proximal Policy Optimization (PPO) algorithm \citep{schulman2017proximal} on the discrete-action Atari environments Breakout and Seaquest \citep{mnih2013playing, mnih2015human}. We adopt the PPO implementation of \citet{shengyi2022the37implementation} in which an agent uses a shared-parameter actor-critic architecture, with a deep convolutional network processing stacked game frames to produce a latent state representation. This state is then fed into two separate heads: a value head that estimates the state-value function, and a policy head that outputs scores for the action distribution. 
Additional experimental details are provided in Appendix \ref{app: RL}

Due to the computational burden of these experiments, an exhaustive hyperparameter search was not feasible. Instead, for each method and environment, we selected promising configurations by sampling 160 trials with a Tree-structured Parzen Estimator (TPE) Bayesian sampler \citep{bergstra2011algorithms}. The top 10 resulting configurations were then re-evaluated across 10 independent random seeds to gather performance statistics. Within this framework, we tested two methods to convert the policy head's scores into action probabilities: the standard \textit{softmax} function and the \textit{catnat} using the \textit{natural} activation function.\\
\begin{table}
    \centering
    \footnotesize
    \caption{Final episodic returns on Seaquest and Breakout environments. The higher the better.  \textbf{Bold} denotes the best-performing models (p-value of the Welch’s t-test $< 0.05$)}
    \begin{tabular}{ccc}
        \toprule
        \multirow{2}{*}{Parameterization} & \multicolumn{2}{c}{RL Environment} \\
        \cmidrule(lr){2-3}
         & Breakout & Seaquest \\ 
        \midrule
         \textit{softmax} & $\mathbf{398 \pm 25}$ & $\mathbf{1875 \pm 312}$ \\
         \textit{catnat} $\nu$ & $\mathbf{406 \pm 34}$ & $\mathbf{2164 \pm 533}$ \\
        \bottomrule
    \end{tabular}
    \label{tab: RL results}
\vspace{-2mm}
\end{table}
Table \ref{tab: RL results} reports the final episodic returns on Seaquest and Breakout environments. The \textit{catnat} parameterization yields better performance than the standard \textit{softmax} function, with a modest improvement in Breakout and a more substantial gain in the more complex Seaquest environment. These results indicate that the information-geometric properties of \textit{catnat} translate into practical benefits even in high-dimensional, sequential decision-making tasks. 
The fact that \textit{catnat} maintains a consistent performance advantage despite this variance suggests that its benefits are robust rather than artifacts of specific hyperparameter settings. In particular, the larger relative gains in Seaquest, where the action space is richer and exploration dynamics more complex, point to potential advantages in environments with increased complexity.
A more exhaustive search could provide a clearer picture of the potential performance ceiling of \textit{catnat}, with future work investigating how the relative benefits of this parameterization scale with task difficulty, action space size, or agent capacity.

\section{Conclusions}

We introduced a new perspective for improving training of models with latent categorical random variables. Specifically, we showed that replacing the standard \textit{softmax} parameterization with a \textit{catnat} function -- a hierarchical sequence of binary decisions -- yields favorable information-geometric properties. 
In accordance with a substantial body of literature \citep{amari1998natural, amari1998natural_2, amari2000methods, martens2020new}, our empirical results across diverse settings indicate that better information-geometric properties facilitate gradient-based optimization. In particular, important performance improvements follow by simply replacing the \textit{softmax} function with the \textit{catnat}.

Two main directions for future work remain. First, although our study focused on categorical distributions, the findings suggest that parameterizations that induce a diagonal Fisher Information Matrix consistently improve performance. Extending this approach to other families of continuous and discrete distributions is a promising avenue for future research. Second, our experiments were not designed to engineer models for state-of-the-art performance, but rather to demonstrate the broad applicability and effectiveness of the proposed approach. Application-specific state-of-the-art methods that rely on categorical random variables are expected to benefit from the \textit{catnat} parameterization, and can achieve new state-of-the-art results with minimal effort.

\section*{Impact Statement}
This work presents fundamental research in machine learning with broad applicability across multiple areas, including generative models, reinforcement learning, and graph neural networks. While the potential impact is wide-ranging, we do not identify a specific direct societal consequence.

\section*{Acknowledgments}
This work was supported by the Swiss National Science Foundation project FNS 204061: \emph{HORD GNN: Higher-Order Relations and Dynamics in Graph Neural Networks}

\bibliography{bibliography}

@article{amari1998natural,
  title={Natural gradient works efficiently in learning},
  author={Amari, Shun-ichi},
  journal={Neural computation},
  volume={10},
  number={2},
  pages={251--276},
  year={1998},
  publisher={MIT Press}
}

@inproceedings{amari1998natural_2,
  title={Why natural gradient?},
  author={Amari, Shun-ichi and Douglas, Scott C},
  booktitle={Proceedings of the 1998 IEEE International Conference on Acoustics, Speech and Signal Processing, ICASSP'98 (Cat. No. 98CH36181)},
  volume={2},
  pages={1213--1216},
  year={1998},
  organization={IEEE}
}

@book{amari2016information,
  title={Information geometry and its applications},
  author={Amari, Shun-ichi},
  volume={194},
  year={2016},
  publisher={Springer}
}

@article{pascanu2013revisiting,
  title={Revisiting natural gradient for deep networks},
  author={Pascanu, Razvan and Bengio, Yoshua},
  journal={arXiv preprint arXiv:1301.3584},
  year={2013}
}

@inproceedings{grosse2016kronecker,
  title={A kronecker-factored approximate fisher matrix for convolution layers},
  author={Grosse, Roger and Martens, James},
  booktitle={International Conference on Machine Learning},
  pages={573--582},
  year={2016},
  organization={PMLR}
}

@inproceedings{amari2019fisher,
  title={Fisher information and natural gradient learning in random deep networks},
  author={Amari, Shun-ichi and Karakida, Ryo and Oizumi, Masafumi},
  booktitle={The 22nd International Conference on Artificial Intelligence and Statistics},
  pages={694--702},
  year={2019},
  organization={PMLR}
}

@article{martens2020new,
  title={New insights and perspectives on the natural gradient method},
  author={Martens, James},
  journal={Journal of Machine Learning Research},
  volume={21},
  number={146},
  pages={1--76},
  year={2020}
}

@article{scarselli2008graph,
  title={The graph neural network model},
  author={Scarselli, Franco and Gori, Marco and Tsoi, Ah Chung and Hagenbuchner, Markus and Monfardini, Gabriele},
  journal={IEEE transactions on neural networks},
  volume={20},
  number={1},
  pages={61--80},
  year={2008},
  publisher={IEEE}
}

@inproceedings{kipf2018neural,
  title={Neural relational inference for interacting systems},
  author={Kipf, Thomas and Fetaya, Ethan and Wang, Kuan-Chieh and Welling, Max and Zemel, Richard},
  booktitle={International conference on machine learning},
  pages={2688--2697},
  year={2018},
  organization={PMLR}
}

@article{fatemi2021slaps,
  title={SLAPS: Self-supervision improves structure learning for graph neural networks},
  author={Fatemi, Bahare and El Asri, Layla and Kazemi, Seyed Mehran},
  journal={Advances in Neural Information Processing Systems},
  volume={34},
  pages={22667--22681},
  year={2021}
}

@article{elinas2020variational,
  title={Variational inference for graph convolutional networks in the absence of graph data and adversarial settings},
  author={Elinas, Pantelis and Bonilla, Edwin V and Tiao, Louis},
  journal={Advances in Neural Information Processing Systems},
  volume={33},
  pages={18648--18660},
  year={2020}
}

@article{cini2023sparse,
  title={Sparse Graph Learning from Spatiotemporal Time Series},
  author={Cini, Andrea and Zambon, Daniele and Alippi, Cesare},
  journal={Journal of Machine Learning Research},
  volume={24},
  pages={1--36},
  year={2023}
}

@inproceedings{manenti2025learning,
  title={Learning Latent Graph Structures and their Uncertainty},
  author={Manenti, Alessandro and Zambon, Daniele and Alippi, Cesare},
  year={2025},
  booktitle={Forty-second International Conference on Machine Learning},
  url={https://openreview.net/forum?id=TMRh3ScSCb}
}

@article{fout2017protein,
  title={Protein interface prediction using graph convolutional networks},
  author={Fout, Alex and Byrd, Jonathon and Shariat, Basir and Ben-Hur, Asa},
  journal={Advances in neural information processing systems},
  volume={30},
  year={2017}
}

@article{shlomi2020graph,
  title={Graph neural networks in particle physics},
  author={Shlomi, Jonathan and Battaglia, Peter and Vlimant, Jean-Roch},
  journal={Machine Learning: Science and Technology},
  volume={2},
  number={2},
  pages={021001},
  year={2020},
  publisher={IOP Publishing}
}

@inproceedings{franceschi2019learning,
  title={Learning discrete structures for graph neural networks},
  author={Franceschi, Luca and Niepert, Mathias and Pontil, Massimiliano and He, Xiao},
  booktitle={International conference on machine learning},
  pages={1972--1982},
  year={2019},
  organization={PMLR}
}

@inproceedings{kipf2017semi,
  title={Semi-Supervised Classification with Graph Convolutional Networks},
  author={Kipf, Thomas N and Welling, Max},
  booktitle={International Conference on Learning Representations},
  year={2017}
}

@article{williams1992simple,
  title={Simple statistical gradient-following algorithms for connectionist reinforcement learning},
  author={Williams, Ronald J},
  journal={Machine learning},
  volume={8},
  pages={229--256},
  year={1992},
  publisher={Springer}
}

@article{mohamed2020monte,
  title={Monte carlo gradient estimation in machine learning},
  author={Mohamed, Shakir and Rosca, Mihaela and Figurnov, Michael and Mnih, Andriy},
  journal={The Journal of Machine Learning Research},
  volume={21},
  number={1},
  pages={5183--5244},
  year={2020},
  publisher={JMLRORG}
}

@inproceedings{kingma2013auto,
  author = {Kingma, Diederik P. and Welling, Max},
  booktitle = {2nd International Conference on Learning Representations},
  title = {{Auto-Encoding Variational Bayes}},
  year = 2014
}

@article{van2017neural,
  title={Neural discrete representation learning},
  author={Van Den Oord, Aaron and Vinyals, Oriol and others},
  journal={Advances in neural information processing systems},
  volume={30},
  year={2017}
}

@inproceedings{jang2017categorical,
  title={Categorical Reparameterization with Gumbel-Softmax},
  author={Jang, Eric and Gu, Shixiang and Poole, Ben},
  booktitle={International Conference on Learning Representations},
  year={2017}
}

@inproceedings{maddison2017concrete,
  title={The concrete distribution: A continuous relaxation of discrete random variables},
  author={Maddison, C and Mnih, A and Teh, Y},
  booktitle={Proceedings of the international conference on learning Representations},
  year={2017},
  organization={International Conference on Learning Representations}
}

@inproceedings{zeiler2010deconvolutional,
  title={Deconvolutional networks},
  author={Zeiler, Matthew D and Krishnan, Dilip and Taylor, Graham W and Fergus, Rob},
  booktitle={2010 IEEE Computer Society Conference on computer vision and pattern recognition},
  pages={2528--2535},
  year={2010},
  organization={IEEE}
}

@inproceedings{kingma2015adam,
  title={Adam: a method for stochastic optimization},
  author={Diederik P Kingma and Jimmy Ba},
  booktitle={International Conference on Learning Representations},
  year={2015}
}

@inproceedings{shengyi2022the37implementation,
  author = {Huang, Shengyi and Dossa, Rousslan Fernand Julien and Raffin, Antonin and Kanervisto, Anssi and Wang, Weixun},
  title = {The 37 Implementation Details of Proximal Policy Optimization},
  booktitle = {ICLR Blog Track},
  year = {2022},
  url  = {https://iclr-blog-track.github.io/2022/03/25/ppo-implementation-details/}
}

@article{schulman2017proximal,
  title={Proximal policy optimization algorithms},
  author={Schulman, John and Wolski, Filip and Dhariwal, Prafulla and Radford, Alec and Klimov, Oleg},
  journal={arXiv preprint arXiv:1707.06347},
  year={2017}
}

@article{lecun2010mnist,
  author = {LeCun, Yann and Cortes, Corinna},
  title = {{MNIST} handwritten digit database},
  url = {http://yann.lecun.com/exdb/mnist/},
  year = 2010
}

@book{sutton1998reinforcement,
  title={Reinforcement learning: An introduction},
  author={Sutton, Richard S and Barto, Andrew G and others},
  volume={1},
  number={1},
  year={1998},
  publisher={MIT press Cambridge}
}

@article{mnih2013playing,
  title={Playing atari with deep reinforcement learning},
  author={Mnih, Volodymyr and Kavukcuoglu, Koray and Silver, David and Graves, Alex and Antonoglou, Ioannis and Wierstra, Daan and Riedmiller, Martin},
  journal={arXiv preprint arXiv:1312.5602},
  year={2013}
}

@article{mnih2015human,
  title={Human-level control through deep reinforcement learning},
  author={Mnih, Volodymyr and Kavukcuoglu, Koray and Silver, David and Rusu, Andrei A and Veness, Joel and Bellemare, Marc G and Graves, Alex and Riedmiller, Martin and Fidjeland, Andreas K and Ostrovski, Georg and others},
  journal={nature},
  volume={518},
  number={7540},
  pages={529--533},
  year={2015},
  publisher={Nature Publishing Group}
}

@article{gneiting2007strictly,
  title = {Strictly {{Proper Scoring Rules}}, {{Prediction}}, and {{Estimation}}},
  author = {Gneiting, Tilmann and Raftery, Adrian E.},
  year = {2007},
  journal = {Journal of the American Statistical Association},
  volume = {102},
  number = {477},
  eprinttype = {jstor},
  pages = {359--378},
  publisher = {[American Statistical Association, Taylor \& Francis, Ltd.]},
  issn = {0162-1459}
}

@article{matheson1976scoring,
  title={Scoring rules for continuous probability distributions},
  author={Matheson, James E and Winkler, Robert L},
  journal={Management science},
  volume={22},
  number={10},
  pages={1087--1096},
  year={1976},
  publisher={INFORMS}
}

@inproceedings{paszke2017automatic,
  title={Automatic differentiation in PyTorch},
  author={Paszke, Adam and Gross, Sam and Chintala, Soumith and Chanan, Gregory and Yang, Edward and DeVito, Zachary and Lin, Zeming and Desmaison, Alban and Antiga, Luca and Lerer, Adam},
  booktitle={NIPS-W},
  year={2017}
}

@book{pflug2012optimization,
  title={Optimization of Stochastic Models: The Interface between Simulation and Optimization},
  author={Pflug, Georg Ch},
  year={1996},
  publisher={Springer Science \& Business Media}
}

@conference{gu2016muprop,
  title = {MuProp: Unbiased Backpropagation for Stochastic Neural Networks},
  booktitle = {4th International Conference on Learning Representations (ICLR)},
  month = may,
  year = {2016},
  slug = {gulevsutmni},
  author = {Gu, Shixiang and Levine, Sergey and Sutskever, Ilya and Mnih, Andriy},
  month_numeric = {5}
}

@article{tucker2017rebar,
  title={Rebar: Low-variance, unbiased gradient estimates for discrete latent variable models},
  author={Tucker, George and Mnih, Andriy and Maddison, Chris J and Lawson, John and Sohl-Dickstein, Jascha},
  journal={Advances in Neural Information Processing Systems},
  volume={30},
  year={2017}
}

@inproceedings{kool2020estimating,
  title={Estimating Gradients for Discrete Random Variables by Sampling without Replacement},
  author={Kool, Wouter and van Hoof, Herke and Welling, Max},
  year={2020},
  booktitle={International Conference on Learning Representations}
}

@inproceedings{chen2018learning,
  title={Learning to explain: An information-theoretic perspective on model interpretation},
  author={Chen, Jianbo and Song, Le and Wainwright, Martin and Jordan, Michael},
  booktitle={International conference on machine learning},
  pages={883--892},
  year={2018},
  organization={PMLR}
}

@article{paulus2020gradient,
  title={Gradient estimation with stochastic softmax tricks},
  author={Paulus, Max and Choi, Dami and Tarlow, Daniel and Krause, Andreas and Maddison, Chris J},
  journal={Advances in Neural Information Processing Systems},
  volume={33},
  pages={5691--5704},
  year={2020}
}

@article{rao1945information,
  title={Information and the accuracy attainable in the estimation of statistical parameters},
  author={Rao, C Radhakrishna and others},
  journal={Bull. Calcutta Math. Soc},
  volume={37},
  number={3},
  pages={81--91},
  year={1945}
}

@inproceedings{miao2017discovering,
  title={Discovering discrete latent topics with neural variational inference},
  author={Miao, Yishu and Grefenstette, Edward and Blunsom, Phil},
  booktitle={International conference on machine learning},
  pages={2410--2419},
  year={2017},
  organization={PMLR}
}

@article{jacobs1991adaptive,
  title={Adaptive mixtures of local experts},
  author={Jacobs, Robert A and Jordan, Michael I and Nowlan, Steven J and Hinton, Geoffrey E},
  journal={Neural computation},
  volume={3},
  number={1},
  pages={79--87},
  year={1991},
  publisher={MIT Press}
}

@article{bergstra2011algorithms,
  title={Algorithms for hyper-parameter optimization},
  author={Bergstra, James and Bardenet, R{\'e}mi and Bengio, Yoshua and K{\'e}gl, Bal{\'a}zs},
  journal={Advances in neural information processing systems},
  volume={24},
  year={2011}
}

@article{bellemare2013arcade,
  title={The arcade learning environment: An evaluation platform for general agents},
  author={Bellemare, Marc G and Naddaf, Yavar and Veness, Joel and Bowling, Michael},
  journal={Journal of artificial intelligence research},
  volume={47},
  pages={253--279},
  year={2013}
}

@article{towers2024gymnasium,
  title={Gymnasium: A Standard Interface for Reinforcement Learning Environments},
  author={Towers, Mark and Kwiatkowski, Ariel and Terry, Jordan K and Balis, John U and De Cola, Gianluca and Deleu, Tristan and Goul{\~a}o, Manuel and Kallinteris, Andreas and Krimmel, Markus and KG, Arjun and others},
  journal={CoRR},
  year={2024}
}

@article{schulman2015high,
  title={High-dimensional continuous control using generalized advantage estimation},
  author={Schulman, John and Moritz, Philipp and Levine, Sergey and Jordan, Michael and Abbeel, Pieter},
  journal={arXiv preprint arXiv:1506.02438},
  year={2015}
}

@inproceedings{peters2006policy,
  title={Policy gradient methods for robotics},
  author={Peters, Jan and Schaal, Stefan},
  booktitle={2006 IEEE/RSJ international conference on intelligent robots and systems},
  pages={2219--2225},
  year={2006},
  organization={IEEE}
}

@book{ross2006simulation,
  title={Simulation},
  author={Ross, Sheldon M},
  year={2006},
  publisher={Academic Press}
}

@inproceedings{Kool2019Buy4R,
  title={Buy 4 REINFORCE Samples, Get a Baseline for Free!},
  author={Wouter Kool and Herke van Hoof and Max Welling},
  booktitle={DeepRLStructPred@ICLR},
  year={2019},
  url={https://api.semanticscholar.org/CorpusID:198489118}
}

@inproceedings{mnih2014neural,
  title={Neural variational inference and learning in belief networks},
  author={Mnih, Andriy and Gregor, Karol},
  booktitle={International Conference on Machine Learning},
  pages={1791--1799},
  year={2014},
  organization={PMLR}
}

@inproceedings{grathwohl2018backpropagation,
  title={Backpropagation through the Void: Optimizing control variates for black-box gradient estimation},
  author={Grathwohl, Will and Choi, Dami and Wu, Yuhuai and Roeder, Geoff and Duvenaud, David},
  booktitle={International Conference on Learning Representations},
  year={2018}
}

@article{bengio2013estimating,
  title={Estimating or propagating gradients through stochastic neurons for conditional computation},
  author={Bengio, Yoshua and L{\'e}onard, Nicholas and Courville, Aaron},
  journal={arXiv preprint arXiv:1308.3432},
  year={2013}
}

@article{niepert2021implicit,
  title={Implicit MLE: backpropagating through discrete exponential family distributions},
  author={Niepert, Mathias and Minervini, Pasquale and Franceschi, Luca},
  journal={Advances in Neural Information Processing Systems},
  volume={34},
  pages={14567--14579},
  year={2021}
}

@inproceedings{minervini2023adaptive,
  title={Adaptive perturbation-based gradient estimation for discrete latent variable models},
  author={Minervini, Pasquale and Franceschi, Luca and Niepert, Mathias},
  booktitle={Proceedings of the AAAI Conference on Artificial Intelligence},
  volume={37},
  number={8},
  pages={9200--9208},
  year={2023}
}

@article{mood1974introduction,
  title={Introduction to the theory of statistics},
  author={Mood, AM and Graybill, FA and Boes, DC},
  year={1974},
  publisher={McGraw Hill}
}

@inproceedings{liu2019rao,
  title={Rao-Blackwellized stochastic gradients for discrete distributions},
  author={Liu, Runjing and Regier, Jeffrey and Tripuraneni, Nilesh and Jordan, Michael and Mcauliffe, Jon},
  booktitle={International Conference on Machine Learning},
  pages={4023--4031},
  year={2019},
  organization={PMLR}
}

@article{titias2015local,
  title={Local expectation gradients for black box variational inference},
  author={Titsias, Michalis K. and L{\'a}zaro-Gredilla, Miguel},
  journal={Advances in neural information processing systems},
  volume={28},
  year={2015}
}

@article{huijben2022review,
  title={A review of the gumbel-max trick and its extensions for discrete stochasticity in machine learning},
  author={Huijben, Iris AM and Kool, Wouter and Paulus, Max B and Van Sloun, Ruud JG},
  journal={IEEE transactions on pattern analysis and machine intelligence},
  volume={45},
  number={2},
  pages={1353--1371},
  year={2022},
  publisher={IEEE}
}

@article{bridle1989training,
  title={Training stochastic model recognition algorithms as networks can lead to maximum mutual information estimation of parameters},
  author={Bridle, John},
  journal={Advances in neural information processing systems},
  volume={2},
  year={1989}
}

@article{nelder1972generalized,
  title={Generalized linear models},
  author={Nelder, John Ashworth and Wedderburn, Robert WM},
  journal={Journal of the Royal Statistical Society Series A: Statistics in Society},
  volume={135},
  number={3},
  pages={370--384},
  year={1972},
  publisher={Oxford University Press}
}

@article{jaynes1957information,
  title={Information theory and statistical mechanics},
  author={Jaynes, Edwin T},
  journal={Physical review},
  volume={106},
  number={4},
  pages={620},
  year={1957},
  publisher={APS}
}

@book{goodfellow2016deep,
  title={Deep learning},
  author={Goodfellow, Ian and Bengio, Yoshua and Courville, Aaron and Bengio, Yoshua},
  volume={1},
  number={2},
  year={2016},
  publisher={MIT press Cambridge}
}

@article{zambon2023graph,
  title={Graph state-space models},
  author={Zambon, Daniele and Cini, Andrea and Livi, Lorenzo and Alippi, Cesare},
  journal={arXiv preprint arXiv:2301.01741},
  year={2023}
}

@inproceedings{burda2016importance,
  author       = {Burda, Yuri and Grosse, Roger and Salakhutdinov, Ruslan},
  title        = {Importance Weighted Autoencoders},
  booktitle    = {4th International Conference on Learning Representations (ICLR)},
  year         = {2016}
}

@article{akrami2022robust,
  title={A robust variational autoencoder using beta divergence},
  author={Akrami, Haleh and Joshi, Anand A and Li, Jian and Ayd{\"o}re, Serg{\"u}l and Leahy, Richard M},
  journal={Knowledge-based systems},
  volume={238},
  pages={107886},
  year={2022},
  publisher={Elsevier}
}

@book{amari2000methods,
  title={Methods of information geometry},
  author={Amari, Shun-ichi and Nagaoka, Hiroshi},
  volume={191},
  year={2000},
  publisher={American Mathematical Soc.}
}

@inproceedings{martins2016softmax,
  title={From softmax to sparsemax: A sparse model of attention and multi-label classification},
  author={Martins, Andre and Astudillo, Ramon},
  booktitle={International conference on machine learning},
  pages={1614--1623},
  year={2016},
  organization={PMLR}
}

@inproceedings{ahmed2023simple,
  title={SIMPLE: A Gradient Estimator for k-Subset Sampling},
  author={Ahmed, Kareem and Zeng, Zhe and Niepert, Mathias and Van den Broeck, Guy},
  booktitle={The Eleventh International Conference on Learning Representations},
  year={2023}
}

@article{jeffares2025introduction,
  title={An Introduction to Discrete Variational Autoencoders},
  author={Jeffares, Alan and Liu, Liyuan},
  journal={arXiv preprint arXiv:2505.10344},
  year={2025}
}

@article{mnih2008scalable,
  title={A scalable hierarchical distributed language model},
  author={Mnih, Andriy and Hinton, Geoffrey E},
  journal={Advances in neural information processing systems},
  volume={21},
  year={2008}
}

@inproceedings{morin2005hierarchical,
  title={Hierarchical probabilistic neural network language model},
  author={Morin, Frederic and Bengio, Yoshua},
  booktitle={International workshop on artificial intelligence and statistics},
  pages={246--252},
  year={2005},
  organization={PMLR}
}

@inproceedings{beygelzimer2009conditional,
  title={Conditional probability tree estimation analysis and algorithms},
  author={Beygelzimer, Alina and Langford, John and Lifshits, Yuri and Sorkin, Gregory and Strehl, Alex},
  booktitle={Proceedings of the Twenty-Fifth Conference on Uncertainty in Artificial Intelligence},
  pages={51--58},
  year={2009}
}
\bibliographystyle{icml2026}

\newpage
\appendix
\onecolumn

\section{Intuition on Natural Gradient}\label{app:intuition_nat_grad}

This section provides additional intuition for why information geometry plays a central role in optimization and why a diagonal Fisher Information Matrix is desirable.

Gradient-based optimization implicitly relies on a geometric assumption about the parameter space.  
Given a function $\loss(\theta)$, consider the following \emph{local optimization problem}:
\begin{equation}\label{eq:optimization-problem-nat-grad}
    \theta_{t+1} = \underset{\theta}{\arg\min} \left\{
    \underbrace{\loss(\theta_t) + \nabla \loss(\theta_t)^\top (\theta - \theta_t)}_{\text{Loss linearization}}
    + \underbrace{\frac{1}{2}\lVert \theta - \theta_t \rVert_2^2}_{\text{Regularization}}
    \right\}.
\end{equation}

The solution to this problem is the standard gradient descent update:
\[
\theta_{t+1} = \theta_t - \nabla \loss(\theta_t).
\]
Problem in \eqref{eq:optimization-problem-nat-grad} considers minimizing the linearization of $\loss(\theta)$ around $\theta_t$. Since the linearization around $\theta_t$ approximates $\loss$ well only \emph{close} to $\theta_t$, a regularization term is added to penalize values of $\theta$ that are \emph{far} from the linearization point.

Note that the notions of ``locality'', ``closeness'', and ``distance'' are enforced by the regularization term and, more specifically, by the Euclidean norm $\lVert \cdot \rVert_2^2$. Thus, gradient descent is the solution to a local optimization problem in which \emph{parameter distance is measured using the Euclidean distance}.

More generally, locality could be measured with respect to an \emph{arbitrary metric} $D$ instead of the Euclidean norm $\lVert \cdot \rVert_2^2$. In the case of statistical models, $D$ could be, for example, the Kullback--Leibler divergence between the parameterized distributions.

When the space is not Euclidean, the local geometry of the parameter space is characterized by a \emph{Riemannian metric tensor} $G(\theta)$, which captures the local curvature of the space.

To account for this curvature, the gradient can and should be appropriately \emph{modified}. This leads to the \emph{natural gradient}, defined as
\begin{equation}
    \tilde{\nabla} \loss(\theta_t) = G^{-1}(\theta_t)\,\nabla \loss(\theta_t).
\end{equation}
The resulting update corresponds to the solution of the local minimization problem induced by the chosen metric $D$.

In the case of statistical models, when $D$ is the Kullback--Leibler divergence, the metric tensor $G$ coincides with the \textbf{Fisher Information Matrix} (FIM).

Consequently, \textbf{the presence of curvature causes the direction of steepest descent to deviate from the ordinary gradient}. \textbf{If} $\mathbf{G}$ \textbf{is diagonal}, the majority of the \textbf{distortions} induced by the geometry \textbf{are eliminated}, and the natural gradient reduces to a coordinate-wise rescaling of the standard gradient. In this regime, adaptive first-order methods such as Adam can approximate this rescaling through per-parameter normalization of gradient magnitudes.

For further intuition regarding the role of information geometry and natural gradients in optimization, we refer the reader to the extensive literature on the subject \citep{amari1998natural, amari1998natural_2, amari2000methods, pascanu2013revisiting, amari2016information, amari2019fisher, martens2020new} and the references therein.

\section{Proof of Proposition \ref{prop: FIM for softmax}}\label{app: proof of FIM for softmax}
Here we prove that the Fisher Information Matrix for a categorical distribution parameterized by a \textit{softmax} $\left( \text{i.e., } p_i = \frac{e^{s_i}}{\sum_{k=1}^Ke^{s_k}} \right)$ is:
\[
G_{\textit{softmax}}(s) =
\begin{bmatrix}
   p_1(1-p_1) & -p_1p_2 & \cdots & -p_1p_K \\
   -p_2p_1 & p_2(1-p_2) & \cdots & -p_2p_K \\
   \vdots  & \vdots  & \ddots & \vdots  \\
   -p_Kp_1 & -p_Kp_2 & \cdots & p_K(1-p_K) 
\end{bmatrix}
\]

\begin{proof}
    For a single observation, let $C = (C_1, \dots, C_K)$ be a one-hot encoded vector with $C_{\bar{k}} = 1$ if the observed category is $\bar{k}$ and zero elsewhere. The log-likelihood is:
    \begin{equation}
        \log( \hspace{0.5mm} p(C|s)) = \sum_{k=1}^K C_k \log( \hspace{0.5mm}p_k) = \sum_{k=1}^K C_k\left( s_k - \log \left(\sum_{k'=1}^K e^{s_{k'}} \right) \right)
    \end{equation}
    We have:
    \begin{equation}
        \frac{\partial \log(p(C_{k} =1|s))}{\partial s_i} = \delta_{ki} - p_i 
    \end{equation}
    With $\delta_{ki}$ being the Kronecker delta. 

    The Fisher Information Matrix is:
    \begin{align}
        G_{\textit{softmax}}(s)_{ij} &=
        \E_{C\sim p(C|s)} \left[ \frac{\partial \log(p(C|s))}{\partial s_i} \hspace{1mm} 
        \frac{\partial \log(p(C|s))}{\partial s_j} \right] \\
        &= \sum_{k=1}^K p_k(\delta_{ki} - p_i) (\delta_{kj} - p_j)
    \end{align}

    \begin{itemize}
        \item For diagonal elements: 
        \begin{equation}
            G_{\textit{softmax}}(s)_{ii} = \sum_{k=1}^K p_k(\delta_{ki} - p_i)^2 = p_i(1 - p_i)
        \end{equation}
        \item For off-diagonal elements: 
        \begin{align}
            G_{\textit{softmax}}(s)_{ij} &= \sum_{k=1}^K p_k(\delta_{ki} - p_i)(\delta_{kj} - p_j) \\
            &= \sum_{k=1}^K p_k(\delta_{ki}\delta_{kj} - \delta_{ki}p_j - \delta_{kj}p_i +p_ip_j) \\
            &= -p_ip_j
        \end{align}
    \end{itemize}
    yielding the Fisher Information Matrix $G_{\textit{softmax}}(s)$ stated in the proposition.
\end{proof}

\section{Proof of Theorem \ref{th: FIM for hierarchical}}\label{app: proof of main theorem}

To prove Theorem \ref{th: FIM for hierarchical}, we use the following series of lemmas and propositions. 

\begin{lemma} \label{lemma: ancestor lemma}
Given a set of bits $[b_1, ..., b_{H}]$ then
    $\actid{ID}$ is an ancestor of $p_{b_1, ..., b_{H}}$ if and only if \boxalo{ID} = \boxalo{$b_1$, ..., $b_{h-1}$}
\end{lemma}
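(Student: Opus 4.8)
The plan is to reduce the ancestor relation to a statement about binary-string prefixes, exploiting the correspondence between tree nodes and identifier strings set up in Section~\ref{sec: catnat}. I would first make the notion of ancestor precise: $\actid{ID}$ is an ancestor of the leaf $p_{b_1,\ldots,b_H}$ exactly when that leaf lies in the subtree rooted at $\actid{ID}$, i.e.\ when $[b_1,\ldots,b_H]$ indexes a leaf in the descendant set of $\actid{ID}$. The key structural fact, which I would read off directly from the product in \eqref{eq: category probability}, is that $p_{b_1,\ldots,b_H}$ factorizes over the hierarchy levels $h=1,\ldots,H$, and the node appearing in the level-$h$ factor is the one with identifier \boxalo{$b_1$, ..., $b_{h-1}$}. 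These $H$ nodes, one per level, are precisely the nodes on the unique root-to-leaf path terminating at the leaf, hence precisely its ancestors.

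For the forward direction I would suppose $\actid{ID}$ sits at hierarchy level $h$, so that \boxalo{ID} has length $h-1$. If $\actid{ID}$ is an ancestor of $p_{b_1,\ldots,b_H}$, then it is one of the $H$ path nodes identified above; since exactly one such node occurs at each level and the level-$h$ node carries the identifier \boxalo{$b_1$, ..., $b_{h-1}$}, matching the levels forces \boxalo{ID} $=$ \boxalo{$b_1$, ..., $b_{h-1}$}. For the converse I would run the identification the other way: if \boxalo{ID} $=$ \boxalo{$b_1$, ..., $b_{h-1}$}, then $\actid{ID}$ is exactly the level-$h$ node on the path to $p_{b_1,\ldots,b_H}$, and is therefore an ancestor. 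Equivalently, the subtree rooted at the node with identifier \boxalo{$b_1$, ..., $b_{h-1}$} collects exactly those leaves whose first $h-1$ bits coincide with $b_1,\ldots,b_{h-1}$, so $[b_1,\ldots,b_H]$ descends from $\actid{ID}$ if and only if it carries this prefix.

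The argument is almost entirely definitional, so I do not anticipate a substantive obstacle. The only point requiring care is to confirm that the indexing scheme of Section~\ref{sec: catnat} assigns to each node exactly the binary string that records the sequence of branches taken from the root — equivalently, that the factors of \eqref{eq: category probability} enumerate the ancestors of a leaf without repetition. Once this identification of ``node identifier'' with ``decision prefix'' is made explicit, both implications collapse to the uniqueness of the path from the root to a given leaf in a binary tree.
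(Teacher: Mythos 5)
Your proposal is correct and follows essentially the same route as the paper's proof: both reduce ancestry to prefix matching by identifying each node's \id{} string with the sequence of left/right binary decisions taken from the root, and then use the fact that a leaf has exactly one ancestor per hierarchy level along its unique root-to-leaf path. The point you flag as requiring care — that the indexing scheme makes node identifiers coincide with decision prefixes — is precisely what the paper's proof spells out via its bit-by-bit halving argument on \id{}, so making that confirmation explicit reproduces the paper's argument.
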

\begin{proof} 
By construction the binary number $[b_1, ..., b_{H}]$ in $p_{b_1, ..., b_{H}}$ represents the binary decisions taken at each hierarchy level $h$. In particular, the first $h-1$ terms $[b_1, ..., b_{h-1}]$ represent, in order, the first $h-1$ binary decisions. 

For each hierarchy level $h$, each node $\actid{ID}$ is identified by $\id$. The numerical value represents its position reading right to left by construction (i.e., $b_h = 1$ corresponds to descending left). The first bit in $\id$ splits the $2^{h-1}$ numbers in half (i.e., for the left half the first bit is one, for the right half is zero). The subsequent bits recursively split the selected group in half following the same logic. Thus, each bit in $\id$ can be viewed as a binary decision of moving left or right. Thus, $\actid{$b_1, ..., b_{h-1}$}$ is the node reached from the root following binary decisions $b_1, ..., b_{h-1}$. Since each $p$ has $H$ ancestors and the root is common the lemma is proved.

\end{proof}

\begin{lemma} \label{lemma: descendant lemma}
Given $a_\alpha$ and $p_\gamma$,
\begin{equation}
    \text{if } p_\gamma \text{ is not a descendant of } a_\alpha \implies \frac{\partial}{\partial a_\alpha}\log(p_\gamma)\ = 0.
\end{equation}
\end{lemma}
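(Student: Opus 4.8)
The plan is to work directly from the explicit product formula (\ref{eq: category probability}) and show that the Bernoulli probability $a_\alpha$ simply never appears in the expression for $\log p_\gamma$ when $p_\gamma$ is not a descendant of $a_\alpha$, so that the partial derivative is a sum of identically-zero terms. First I would take the logarithm of (\ref{eq: category probability}) to convert the product into a sum of per-level contributions: writing $\gamma = [b_1,\dots,b_H]$,
\[
\log p_\gamma = \sum_{h=1}^{H}\left[\, b_h \log\!\left(\actid{$b_1$, ...,$b_{h-1}$}\right) + (1-b_h)\log\!\left(1-\actid{$b_1$, ...,$b_{h-1}$}\right)\right].
\]
The key structural observation is that the Bernoulli probability entering the level-$h$ summand is exactly $\actid{$b_1$, ...,$b_{h-1}$}$, i.e.\ the node reached from the root by following the first $h-1$ decisions encoded in $\gamma$; no other node's probability appears at that level.

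Next I would invoke Lemma~\ref{lemma: ancestor lemma}, which identifies precisely the nodes $\actid{$b_1$, ...,$b_{h-1}$}$ for $h=1,\dots,H$ as the $H$ ancestors of the leaf $p_\gamma$. Consequently the set of Bernoulli factors constituting $\log p_\gamma$ coincides exactly with the set of ancestors of $p_\gamma$, and equivalently with the set of nodes $a_i$ such that $p_\gamma$ is a descendant of $a_i$. Treating the Bernoulli probabilities $\{a_i\}$ as the independent coordinates of differentiation, the operator $\partial/\partial a_\alpha$ annihilates every summand not containing $a_\alpha$. Therefore $\partial \log p_\gamma / \partial a_\alpha$ can be nonzero only if $a_\alpha$ occurs among these factors, i.e.\ only if $a_\alpha$ is an ancestor of $p_\gamma$. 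Under the hypothesis that $p_\gamma$ is \emph{not} a descendant of $a_\alpha$, the variable $a_\alpha$ is absent from all $H$ summands, and the derivative vanishes term by term, which is the claim.

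The argument contains no computationally hard step; the only point requiring genuine care is the bookkeeping that ties the purely algebraic appearance of $a_\alpha$ in the product to the graph-theoretic notion of ancestry/descendancy, and this is exactly what Lemma~\ref{lemma: ancestor lemma} supplies. I would also be explicit that the differentiation here is with respect to $a_\alpha$ regarded as a free variable, not with respect to the underlying score $s_\alpha$; the chain-rule passage from $a_\alpha$ back to $s_\alpha$ is orthogonal to this lemma and is handled separately in the assembly of Theorem~\ref{th: FIM for hierarchical}.
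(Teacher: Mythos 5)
Your proposal is correct and takes essentially the same route as the paper's own proof: both invoke Lemma~\ref{lemma: ancestor lemma} to conclude that $a_\alpha$ appears as a factor in (\ref{eq: category probability}) precisely when it is an ancestor of $p_\gamma$, so when $p_\gamma$ is not a descendant of $a_\alpha$ the variable is absent from the expression and the derivative of $\log p_\gamma$ vanishes. Your explicit passage to the log-sum form and the remark that the $a_i$ are treated as free coordinates (with the chain rule to $s_\alpha$ deferred) are harmless elaborations of the identical argument.
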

\begin{proof}
Consider the binary representation of $a_\alpha$. From Lemma \ref{lemma: ancestor lemma} $\actid{ID}$ is not an ancestor of $p_{b_1, ..., b_H}$ then \boxalo{ID} $\not=$ \boxalo{$b_1$, ..., $b_{h-1}$}. In that case $\actid{ID}$ is not a term in (\ref{eq: category probability}) and thus $\frac{\partial}{\partial a_\alpha}\log(p_\gamma) = \frac{1}{p_\gamma}\frac{\partial}{\partial a_\alpha}p_\gamma = 0$.
\end{proof} 

\begin{corollary}\label{corollary: descendant corollary}
Given $a_\alpha$, $a_\beta$ and $p_\gamma$ with $\alpha \not= \beta$,
    \begin{center}
    If $a_\alpha$ is neither a descendant nor an ancestor of $a_\beta$ $\implies \frac{\partial}{\partial a_\alpha}\log(p_\gamma) \frac{\partial}{\partial a_\beta}\log(p_\gamma) = 0$.
    \end{center}
\end{corollary}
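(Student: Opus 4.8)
The plan is to reduce the statement to a purely combinatorial fact about the tree, using Lemma~\ref{lemma: descendant lemma} to control when each factor can be nonzero. First I would invoke the contrapositive of Lemma~\ref{lemma: descendant lemma}: if $\frac{\partial}{\partial a_\alpha}\log(p_\gamma)\neq 0$, then $p_\gamma$ must be a descendant of $a_\alpha$, equivalently $a_\alpha$ is an ancestor of $p_\gamma$. The same holds for $a_\beta$. Consequently the product $\frac{\partial}{\partial a_\alpha}\log(p_\gamma)\,\frac{\partial}{\partial a_\beta}\log(p_\gamma)$ can be nonzero only in the single case where $p_\gamma$ descends from both $a_\alpha$ and $a_\beta$ simultaneously, i.e.\ both nodes are ancestors of the same leaf $p_\gamma$.

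The heart of the argument is then to show that this simultaneous-ancestor situation is incompatible with the hypothesis that $a_\alpha$ is neither an ancestor nor a descendant of $a_\beta$. For this I would appeal to Lemma~\ref{lemma: ancestor lemma}: writing $p_\gamma = p_{b_1,\dots,b_H}$, every ancestor of $p_\gamma$ has an ID equal to some prefix $[b_1,\dots,b_{h-1}]$ of the leaf's binary string. If both $a_\alpha$ and $a_\beta$ are ancestors of $p_\gamma$, their IDs are two prefixes of the common string $[b_1,\dots,b_H]$; any two prefixes of a fixed string are nested, so one ID is a prefix of the other. Applying Lemma~\ref{lemma: ancestor lemma} once more, this nesting means precisely that one of $a_\alpha,a_\beta$ is an ancestor of the other, contradicting the hypothesis.

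Therefore, under the hypothesis, at least one of the two factors must vanish, and the product is identically zero. The only step requiring care is the chain property, namely that the ancestor set of a fixed leaf is totally ordered by the ancestor relation; but this follows immediately from the prefix characterization in Lemma~\ref{lemma: ancestor lemma}, since the prefixes of a fixed binary string form a chain under inclusion and correspond exactly to the nodes lying on the single root-to-leaf path through $p_\gamma$. I expect no genuine obstacle beyond stating this nesting cleanly: the corollary is essentially the combination of the two preceding lemmas with the observation that a root-to-leaf path is a single chain, so two incomparable nodes cannot both lie on it.
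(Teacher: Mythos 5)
Your proof is correct and takes essentially the same route as the paper's: the paper likewise deduces the corollary directly from Lemma~\ref{lemma: descendant lemma}, noting that two incomparable nodes share no common descendant leaf, so at least one of the two log-derivative factors vanishes for every $p_\gamma$. The only difference is that the paper asserts this tree fact as trivial, whereas you justify it explicitly via the prefix characterization of Lemma~\ref{lemma: ancestor lemma} (prefixes of a fixed binary string form a chain), which is a harmless and slightly more careful elaboration of the same argument.
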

\begin{proof}
    If $a_\alpha$ is neither a descendant nor an ancestor of $a_\beta$ then they do not share any descendant and thus by Lemma \ref{lemma: descendant lemma} the Corollary is trivially proved.
\end{proof}

\begin{proposition} \label{proposition: diagonal fisher information}
    The Fisher Information Matrix $G_a(s)$ for the \textit{catnat} parameterization is diagonal.
\end{proposition}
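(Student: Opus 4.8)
The plan is to show that the Fisher Information Matrix $G_a(s)$ for the \textit{catnat} parameterization has vanishing off-diagonal entries, i.e., $G_a(s)_{\alpha\beta} = 0$ whenever $\alpha \neq \beta$. Since the FIM is defined as an expectation of products of score derivatives, I would first write
\begin{equation}
    G_a(s)_{\alpha\beta} = \E_{\vec{b} \sim p}\left[ \frac{\partial}{\partial s_\alpha} \log p_{\vec{b}} \; \frac{\partial}{\partial s_\beta} \log p_{\vec{b}} \right],
\end{equation}
and then apply the chain rule $\frac{\partial}{\partial s_\alpha} = \frac{\partial a_\alpha}{\partial s_\alpha}\frac{\partial}{\partial a_\alpha}$ to reduce each factor to a derivative with respect to the Bernoulli probability $a_\alpha$. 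The factor $\frac{\partial a_\alpha}{\partial s_\alpha}$ depends only on $s_\alpha$ and pulls out of the expectation over $\vec{b}$, so it suffices to show that $\E_{\vec{b}}\big[ \frac{\partial}{\partial a_\alpha}\log p_{\vec{b}} \; \frac{\partial}{\partial a_\beta}\log p_{\vec{b}} \big] = 0$ for $\alpha \neq \beta$.

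The key structural observation is the tree geometry: any two distinct nodes $a_\alpha$ and $a_\beta$ fall into one of two mutually exclusive cases. Either one is an ancestor of the other (they lie on a common root-to-leaf path), or neither is an ancestor nor a descendant of the other (their subtrees are disjoint). The second case is handled immediately by Corollary \ref{corollary: descendant corollary}, which gives $\frac{\partial}{\partial a_\alpha}\log p_\gamma \, \frac{\partial}{\partial a_\beta}\log p_\gamma = 0$ pointwise for every leaf $\gamma$, so the expectation vanishes term by term. The remaining and more delicate case is when, say, $a_\beta$ is a descendant of $a_\alpha$.

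For the ancestor–descendant case I would exploit the \emph{nested} structure of the subtrees. Using Lemma \ref{lemma: descendant lemma}, the product $\frac{\partial}{\partial a_\alpha}\log p_\gamma \, \frac{\partial}{\partial a_\beta}\log p_\gamma$ is nonzero only for leaves $\gamma$ lying in the subtree $\mathcal{D}_\beta$ rooted at the deeper node $a_\beta$ (since $\mathcal{D}_\beta \subseteq \mathcal{D}_\alpha$). On that subtree, the derivative $\frac{\partial}{\partial a_\alpha}\log p_\gamma$ is a \emph{constant} in $\gamma$: because $a_\alpha$ appears in the product formula (\ref{eq: category probability}) for every descendant $p_\gamma$ through the single factor $(a_\alpha)^{b_{h_\alpha}}(1-a_\alpha)^{1-b_{h_\alpha}}$, and all leaves in $\mathcal{D}_\beta$ share the same bit $b_{h_\alpha}$ (namely the one directing toward $a_\beta$), this factor, and hence its log-derivative $\pm 1/a_\alpha$ or $\mp 1/(1-a_\alpha)$, is identical across $\mathcal{D}_\beta$. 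The remaining sum then factors as this constant times $\E_{\vec{b}}\big[ \frac{\partial}{\partial a_\beta}\log p_{\vec{b}} \big]$ restricted to $\mathcal{D}_\beta$, which is a conditional expectation of a score function and vanishes by the standard identity $\E[\nabla_\theta \log p] = 0$. The main obstacle is handling this bookkeeping cleanly — making precise that the $a_\alpha$-derivative is constant over $\mathcal{D}_\beta$ and that the residual sum is genuinely a (conditional) score-function expectation that integrates to zero. Once that vanishing-score argument is set up correctly, both cases close and the off-diagonal entries are zero, proving the Proposition.
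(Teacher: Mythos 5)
Your proposal is correct and takes essentially the same route as the paper: the same case split via Corollary~\ref{corollary: descendant corollary}, the same restriction of the sum to leaves in $\mathcal{D}_\beta$ via Lemma~\ref{lemma: descendant lemma}, and the same key observation that the $a_\alpha$-derivative is constant over $\mathcal{D}_\beta$ because all those leaves share the bit $b_{h_\alpha}$. The only (cosmetic) difference is the last step: where you invoke the score identity $\E\left[\frac{\partial}{\partial a_\beta}\log p\right]=0$ for the conditional distribution over $\mathcal{D}_\beta$, the paper verifies the same cancellation explicitly by splitting the sum on $b_{h_\beta}$ and using $\sum_{b_{h_\beta}=1} p_{\vec{b}} = P(a_\beta)\,a_\beta$ and $\sum_{b_{h_\beta}=0} p_{\vec{b}} = P(a_\beta)\,(1-a_\beta)$, which is precisely an inline proof of that identity.
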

\begin{proof}
    To prove the Proposition we prove that all the off-diagonal terms of $G_a(s)$ are zero.
    By definition:
    \begin{align}
    \nonumber
        G_a(s)_{\alpha\beta} &= \E_{C \sim p(C|s)} \left[ \frac{\partial}{\partial s_\alpha}\log(p(C|s)) \frac{\partial}{\partial s_\beta}\log(p(C|s))\right] \\
        \nonumber
        &= \E_{C \sim p(C|s)} \left[ \frac{\partial a_\alpha}{\partial s_\alpha} \frac{\partial}{\partial a_\alpha}\log(p(C|s)) \frac{\partial a_\beta}{\partial s_\beta} \frac{\partial}{\partial a_\beta}\log(p(C|s))\right] \\
        &= \sum_{\vec{b}\in \{0,1\}^H}p_{\vec{b}} \left[\frac{\partial a_\alpha}{\partial s_\alpha} \frac{\partial}{\partial a_\alpha}\log(p_{\vec{b}}) \frac{\partial a_\beta}{\partial s_\beta} \frac{\partial}{\partial a_\beta}\log(p_{\vec{b}})\right]
    \end{align}
    From Corollary \ref{corollary: descendant corollary} if $a_\alpha$ is neither a descendant nor an ancestor of $a_\beta$ the term in the square brackets is zero. We thus consider $a_\alpha$ being an ancestor of $a_\beta$, since the FIM is symmetric this is not restrictive. From Lemma \ref{lemma: descendant lemma} the only terms that may produce nonzero addends are from the $\vec{b}$ that are descendant $\mathcal{D}_\beta$ of $a_\beta$. Thus:
    \begin{equation}
        G_a(s)_{\alpha\beta} = \sum_{\vec{b}\in \mathcal{D}_\beta}p_{\vec{b}} \left[\frac{\partial a_\alpha}{\partial s_\alpha} \frac{\partial}{\partial a_\alpha}\log(p_{\vec{b}}) \frac{\partial a_\beta}{\partial s_\beta} \frac{\partial}{\partial a_\beta}\log(p_{\vec{b}})\right]
    \end{equation}
    We call $h_\alpha$ and $h_\beta$ the hierarchies of $a_\alpha$ and $a_\beta$ and consider their binary representation ${\actid{ID}}_\alpha$ and ${\actid{ID}}_\beta$ . From Equation \ref{eq: category probability} we write the log-likelihood derivative as:
    \begin{equation}
        \frac{\partial}{\partial \actid{ID}}\log \left( p_{b_1, ..., b_H} \right) = \mathbbm{1}\left[ [b_1, ..., b_{h-1}] = \tiny{\boxed{\text{\scriptsize{ID}}}} \hspace{0.5mm}\right] \frac{2 b_h -1}{\actid{ID}^{b_h} (1 - \actid{ID})^{(1 - b_h)}}
    \end{equation}
    where $\mathbbm{1}[\cdot]$ is the indicator function, which evaluates to 1 if the given condition is true and 0 if it is false. The $\vec{b} \in \mathcal{D}_\beta$ share the same first $h_{\beta}-1$ bits. Since $a_\alpha$ is an ancestor of $a_\beta$ then $h_{\beta} > h_\alpha$ and thus $b_{h_\alpha}$ is the same for all $\vec{b} \in \mathcal{D}_\beta$. Then:
    \begin{align}
    \nonumber
        G_a(s)_{\alpha\beta} &= K\sum_{\vec{b}\in \mathcal{D}_\beta}p_{\vec{b}} \left[ \frac{\partial}{\partial a_\beta}\log(p_{\vec{b}})\right] \\
    \nonumber
        &= K \underset{b_{h_\beta} =1}{\sum_{\vec{b}\in \mathcal{D}_\beta}} p_{\vec{b}} \underbrace{\left[ \frac{\partial}{\partial a_\beta}\log(p_{\vec{b}})\right]}_{{{\actid{ID}}_{\beta}}^{-1}} +
        K \underset{b_{h_\beta} = 0}{\sum_{\vec{b}\in \mathcal{D}_\beta}} p_{\vec{b}} \underbrace{\left[ \frac{\partial}{\partial a_\beta}\log(p_{\vec{b}})\right]}_{-{(1-{\actid{ID}}_{\beta})}^{-1}} \\
    \nonumber
        &= K \underset{b_{h_\beta} =1}{\sum_{\vec{b}\in \mathcal{D}_\beta}} \frac{p_{\vec{b}}}{{\actid{ID}}_{\beta}}  -
        K \underset{b_{h_\beta} = 0}{\sum_{\vec{b}\in \mathcal{D}_\beta}} \frac{p_{\vec{b}}}{1 - {\actid{ID}}_{\beta}} \\
        &= \frac{K}{{\actid{ID}}_{\beta}} \underset{b_{h_\beta} =1}{\sum_{\vec{b}\in \mathcal{D}_\beta}} p_{\vec{b}}  -
        \frac{K}{1 - {\actid{ID}}_{\beta}}  \underset{b_{h_\beta} = 0}{\sum_{\vec{b}\in \mathcal{D}_\beta}} p_{\vec{b}} 
    \end{align}

    By construction:
    \begin{align}
    \nonumber
        &\underset{b_{h_\beta} =1}{\sum_{\vec{b}\in \mathcal{D}_\beta}} p_{\vec{b}} = P\left(\text{descend to node } {\actid{ID}}_{\beta}\right) \cdot {\actid{ID}}_{\beta} \\
        \nonumber
        &\underset{b_{h_\beta} = 0}{\sum_{\vec{b}\in \mathcal{D}_\beta}} p_{\vec{b}} = P\left(\text{descend to node } {\actid{ID}}_{\beta}\right) \cdot (1 - {\actid{ID}}_{\beta})
    \end{align}

Thus, $G_a(s)_{\alpha\beta} = 0$ for off-diagonal terms.
\end{proof}

\begin{proposition}\label{proposition: diagonal elements of hierarcical FIM}
    The diagonal terms of the Fisher Information Matrix $G_a(s)$ for the \textit{catnat} parameterization are:
    \begin{equation}
        G_a(s)_{ii} = P\left(a_i \right) \left(\frac{\partial a_i}{\partial s_i}  \right)^2  \left( \frac{1}{a_i(1-a_i) }\right)
    \end{equation}
\end{proposition}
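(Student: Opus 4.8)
The plan is to compute the diagonal entries directly from the definition, reusing the machinery already assembled for Proposition~\ref{proposition: diagonal fisher information}. Starting from
\[
G_a(s)_{ii} = \E_{C\sim p(C|s)}\left[\left(\frac{\partial}{\partial s_i}\log p(C|s)\right)^2\right],
\]
I would first apply the chain rule $\frac{\partial}{\partial s_i}\log p(C|s) = \frac{\partial a_i}{\partial s_i}\,\frac{\partial}{\partial a_i}\log p(C|s)$ to pull the activation-derivative factor out of the expectation, and then expand the expectation as a sum over the leaves $\vec{b}\in\{0,1\}^H$, giving
\[
G_a(s)_{ii} = \left(\frac{\partial a_i}{\partial s_i}\right)^2 \sum_{\vec{b}\in\{0,1\}^H} p_{\vec{b}}\left(\frac{\partial}{\partial a_i}\log p_{\vec{b}}\right)^2 .
\]
By Lemma~\ref{lemma: descendant lemma}, every summand for which $p_{\vec{b}}$ is not a descendant of $a_i$ vanishes, so the sum collapses to the subtree $\vec{b}\in\mathcal{D}_i$.

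Next I would substitute the explicit log-likelihood derivative derived inside the proof of Proposition~\ref{proposition: diagonal fisher information}. For $\vec{b}\in\mathcal{D}_i$ the indicator there is satisfied, and because $(2b_{h_i}-1)^2 = 1$ the squared derivative equals $a_i^{-2}$ when $b_{h_i}=1$ and $(1-a_i)^{-2}$ when $b_{h_i}=0$. Splitting $\mathcal{D}_i$ according to the value of the bit $b_{h_i}$ then yields
\[
\sum_{\vec{b}\in\mathcal{D}_i} p_{\vec{b}}\left(\frac{\partial}{\partial a_i}\log p_{\vec{b}}\right)^2 = \frac{1}{a_i^2}\underset{b_{h_i}=1}{\sum_{\vec{b}\in\mathcal{D}_i}} p_{\vec{b}} \;+\; \frac{1}{(1-a_i)^2}\underset{b_{h_i}=0}{\sum_{\vec{b}\in\mathcal{D}_i}} p_{\vec{b}} .
\]
Finally I would invoke the same ``by construction'' mass identities used at the end of that proof, namely that the two partial leaf masses equal $P(a_i)\,a_i$ and $P(a_i)(1-a_i)$ respectively, so the bracket simplifies to $P(a_i)\big(\tfrac{1}{a_i}+\tfrac{1}{1-a_i}\big) = P(a_i)\big/\big(a_i(1-a_i)\big)$, and reinstating the factored $(\partial a_i/\partial s_i)^2$ gives exactly the claimed formula.

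This proof is essentially the $\alpha=\beta$ specialization of the off-diagonal computation in Proposition~\ref{proposition: diagonal fisher information}, so the only real subtlety — rather than a genuine obstacle — is that squaring the derivative removes the sign $2b_{h_i}-1$ that previously caused the two partial sums to cancel; here they instead add constructively, which is precisely why the diagonal is nonzero while the off-diagonal vanishes. The one step requiring care is the correct bookkeeping of the two leaf-mass identities, which rely on $P(a_i)$ being the total descendant-leaf probability and on $a_i$ being the Bernoulli split at node $i$; everything else is routine algebra already licensed by Lemma~\ref{lemma: descendant lemma} and the derivative expression established earlier.
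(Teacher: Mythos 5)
Your proposal is correct and follows essentially the same route as the paper's own proof: expand the expectation over leaves, invoke Lemma~\ref{lemma: descendant lemma} to restrict to $\mathcal{D}_i$, split on the bit $b_{h_i}$, and apply the two leaf-mass identities $\sum_{b_{h_i}=1} p_{\vec{b}} = P(a_i)\,a_i$ and $\sum_{b_{h_i}=0} p_{\vec{b}} = P(a_i)(1-a_i)$ to obtain $P(a_i)/\bigl(a_i(1-a_i)\bigr)$. Your closing observation about the sign $2b_{h_i}-1$ being eliminated by squaring --- so the two partial sums add rather than cancel as in the off-diagonal case --- is a nice articulation of why the diagonal survives, but the computation itself matches the paper step for step.
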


\begin{proof}
    Reusing arguments from the previous proofs we can write:
    \begin{align}
    \nonumber
        G_a(s)_{ii} &= \E_{C \sim p(C|s)} \left[ \left(\frac{\partial}{\partial s_i}\log(p(C|s)) \right)^2 \right] \\
        \nonumber
        &=  \sum_{\vec{b}\in \mathcal{D}_i} p_{\vec{b}} \left(\frac{\partial}{\partial s_i} \log(p_{\vec{b}}) \right)^2 \\
         \nonumber
        &=  \sum_{\vec{b}\in \mathcal{D}_i} p_{\vec{b}} \left( \frac{\partial a_i}{\partial s_i}\frac{\partial}{\partial a_i} \log(p_{\vec{b}}) \right)^2  \\
        \nonumber
        &= \left( \frac{\partial a_i}{\partial s_i} \right)^2 \sum_{\vec{b}\in \mathcal{D}_i} p_{\vec{b}} \left(\frac{\partial}{\partial a_i} \log(p_{\vec{b}}) \right)^2    \\
        \nonumber
        &=  \left( \frac{\partial a_i}{\partial s_i} \right)^2 
        \left[
        \underset{b_{h_i} =1}{\sum_{\vec{b}\in \mathcal{D}_i}} p_{\vec{b}} 
        \left(  \frac{1}{{\actid{ID}}_{i}} \right)^2  
        +
        \underset{b_{h_i} =0}{\sum_{\vec{b}\in \mathcal{D}_i}} p_{\vec{b}} 
        \left(  \frac{-1}{1-{\actid{ID}}_{i}} \right)^2  
        \right]
        \\
        \nonumber
        &= \left( \frac{\partial a_i}{\partial s_i} \right)^2 
        \left[ 
        P(a_i) \left( \frac{1}{a_i} \right)^{\not{2}} \cancel{a_i}
        +
        P(a_i) \left( \frac{1}{1 - {a_i}} \right)^{\not{2}} \cancel{(1 - a_i)}
        \right]\\
        \nonumber
        &= P \left(a_i \right) \left(\frac{\partial a_i}{\partial s_i}  \right)^2  \left( \frac{1}{a_i(1-a_i) }\right)
    \end{align}
\end{proof}

Theorem \ref{th: FIM for hierarchical} follows naturally from Proposition \ref{proposition: diagonal fisher information} and Proposition \ref{proposition: diagonal elements of hierarcical FIM}.

\section{Proof of Corollary \ref{corollary: FIM natural}}\label{appendix: FIM natural}
\begin{proof}
    
To prove the corollary we start from Theorem \ref{th: FIM for hierarchical} and substitute the definition in (\ref{eq: natural activation}).

For the natural activation function $\nu$:
\begin{align}
\nonumber
    \left(\frac{\partial \nu_i}{\partial s_i}  \right)^2 \left( \frac{1}{\nu_i(1-\nu_i) }\right) &= 
    \left(\frac{\partial}{\partial s_i} \frac{1 + \sin\left(\frac{\pi(s_i - C)}{A}\right)}{2}\right) ^2
    \left(\frac{1}{ \left( \frac{1 + \sin\left(\frac{\pi(s_i - C)}{A}\right)}{2} \right) \left(1 - \frac{1 + \sin\left(\frac{\pi(s_i - C)}{A}\right)}{2} \right) }\right) \\
\nonumber
&= 
    \left(\frac{\pi}{2A} \cos\left(\frac{\pi(s_i - C)}{A}\right)\right) ^2
    \left( \frac{4}{\left( 1+ \sin\left(\frac{\pi(s_i - C)}{A}\right)\right) \left( 1 - \sin\left(\frac{\pi(s_i - C)}{A}\right)\right)} \right) \\
\nonumber
&= 
    \left(\frac{\pi}{2A} \cos\left(\frac{\pi(s_i - C)}{A}\right)\right) ^2
    \left( \frac{4}{\left( 1 - \sin^2\left(\frac{\pi(s_i - C)}{A}\right)\right) } \right) \\
\nonumber
&= 
    \left(\frac{\pi}{2A} \cos\left(\frac{\pi(s_i - C)}{A}\right)\right) ^2
    \left( \frac{4}{\cos^2\left(\frac{\pi(s_i - C)}{A}\right)} \right) \\
\nonumber
&=
    \frac{\pi^2}{A^2}
\end{align}

Thus,
\begin{align}
    G_{\nu}(s)_{ii} = P \left(a_i \right) \frac{\pi^2}{A^2}
\end{align}
\end{proof}

\section{Experimental Details: Graph Structure Learning}\label{app: GSL dataset}
\label{appendix:synthetic}

This appendix summarizes the experimental setup for the Graph Structure Learning experiment, adapted from \cite{manenti2025learning}.

\subsection{Data-Generating Process}

The dataset is generated from a system model comprising two components: a {latent graph distribution} $P_A^{\theta^*}$ that produces a random adjacency matrix $A$, and a Graph Neural Network $f_{\psi^*}$ that maps an input feature matrix $x$ and the graph $A$ to an output $y$.

\subsubsection*{Latent Graph Distribution}

The latent graph structure $A$ is sampled from a {multivariate Bernoulli distribution} parameterized by a matrix of probabilities $\theta^*_{ij}$:
\begin{equation}\label{eq: GSL true distribution}
P_{\theta^*}(A) = \prod_{i,j} (\theta_{ij}^*)^{A_{ij}} (1-\theta_{ij}^*)^{1-A_{ij}}
\end{equation}
Each entry $A_{ij}$ represents a potential edge, sampled independently with a success probability of $\theta_{ij}^*$. The ground-truth parameters $\theta_{ij}^*$ are set to the same non-zero value $\theta^*$ for edges forming the community structure depicted in Figure~\ref{fig:community graph} and are zero otherwise. For the experiments we use a graph with 4 communities.
\begin{figure}[!h]
  \centering
  \includegraphics[scale=0.85]{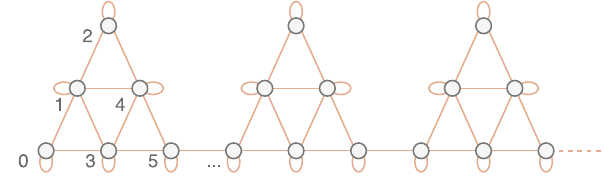}
  \caption{The base graph structure used to generate adjacency matrices for the experiments in Section~\ref{sec: GSL experiment}. The matrices are sampled as subgraphs from this structure, where each orange edge is included with an independent probability of $\theta_{ij}^*$, according to the distribution $P_{\theta^*}(A)$ in (\ref{eq: GSL true distribution}). Image taken from \citet{manenti2025learning}}
  \label{fig:community graph}
\end{figure}

\subsubsection*{GNN Architecture}

The GNN function $f_{\psi^*}$ used to process the sampled graph $A$ and a random input feature matrix $x \in \mathbb{R}^{N \times d_{in}}$ is a GCN \citep{kipf2017semi}. The input features are sampled from a normal distribution, $x \sim \mathcal{N}(0, \sigma_x^2 \mathbb{I})$ with $\sigma_x=1$. 

This generation process yielded a dataset of {10,000 input-output $(x,y)$ pairs}, partitioned into a {training set (80\%)}, a {validation set (10\%)}, and a {test set (10\%)}. The learnable model we train has an identical architecture to the one described above.

\subsection{Learnable Model}
For the learnable deep learning architecture, we employ the same class of latent graph distribution and GNN architecture as in the data-generating model. We jointly learn the parameters of the latent graph distribution, denoted by $\theta$, and the parameters of the GNN, denoted by $\psi$.

We perform a two-stage grid search over learning rates. In all experiments, the learning rate for the GNN parameters $\psi$ and the latent graph parameters $\theta$ is chosen from the same grid. The search procedure is:
\begin{enumerate}
  \item Coarse grid: Test rates in \{0.005, 0.01, 0.02, 0.05, 0.1, 0.2, 0.5\}.
  \item Refined grid: Centered around the best-performing coarse rate (selected by validation loss). The refined grids used in our experiments were:
    \begin{itemize}
      \item Sigmoid parameterization: \{0.025, 0.03, 0.037, 0.045, 0.055, 0.067, 0.082\}.
      \item Natural parameterization: \{0.012, 0.015, 0.018, 0.022, 0.027, 0.033, 0.041\}.
    \end{itemize}
\end{enumerate}
The final learning rate for each run is the one that yields the lowest validation loss. Using this best learning rate, we train 10 models to compute aggregate statistics.

\subsection{Score Function Gradient Estimator}
To compute gradients with respect to the parameters $\theta_{ij}$ of the latent graph distribution, we use the Score Function Gradient Estimator (SFGE), also known as REINFORCE \citep{williams1992simple, mohamed2020monte}. The SFGE allows us to estimate the gradient of an expectation of a function $\loss(A)$ as follows:
$$\nabla_{\theta} \mathbb{E}_{A \sim P_{\theta}(A)}[\loss(A)] = \mathbb{E}_{A \sim P_{\theta}(A)}[\loss(A) \nabla_{\theta} \log P_{\theta}(A)]$$
A known issue with the SFGE is its high variance \citep{mohamed2020monte}. To mitigate this, we incorporate a baseline term, which reduces variance without introducing bias into the gradient estimate. The gradient is then computed as:
$$\nabla_{\theta} \mathbb{E}_{A \sim P_{\theta}(A)}[(\loss(A) - b) \nabla_{\theta} \log P_{\theta}(A)]$$
We use a multi-sample baseline where, for each sample in a batch of $M$ sampled graphs, the baseline $b$ is constructed using the estimate of the loss from the other $M-1$ samples. 

\subsection{Loss Function}
The model is trained to learn both the GNN parameters $\psi$ and the graph distribution parameters $\theta$ by minimizing the Energy Score (ES) \citep{gneiting2007strictly}. The ES is a multivariate extension of the Continuous Ranked Probability Score (CRPS), a proper scoring rule \citep{matheson1976scoring} that quantifies the compatibility between the model's predictive distribution and the ground-truth observation $y$.

Given $M$ adjacency matrices $\{A_m\}_{m=1}^M$ sampled from the latent graph distribution $P_{\theta}(A)$, the empirical ES loss is defined as:
$$\loss_{\text{ES}} = \frac{1}{M} \sum_{m=1}^{M} \|f_{\psi}(x, A_m) - y\|_2 - \frac{1}{2M(M-1)} \sum_{m \neq n} \|f_{\psi}(x, A_m) - f_{\psi}(x, A_n)\|_2$$

\subsection{Additional Training Parameters}
All models are implemented in PyTorch \citep{paszke2017automatic} and trained with the Adam optimizer \citep{kingma2015adam}. We use a weight decay of $0$, a batch size of $64$, $M$ equal to $32$ and $40$ epochs per run. Scores were initialized so that $\theta_{ij} \sim \mathcal{U}(0,0.1)$).

\section{Experimental Details: VAE}\label{app: VAE}
This appendix summarizes the setup for the experiments with Variational Autoencoders, which we adopt from the code available at the following link: \href{https://github.com/jxmorris12/categorical-vae}{https://github.com/jxmorris12/categorical-vae}. 

\subsection{Model Architecture}
The Variational Autoencoder (VAE) is composed of an encoder network $q_{\vec{s}}(\vec{C}|x)$ and a decoder network $f_\psi(\hat{y}|\vec{C})$. Both networks are implemented with a convolutional structure.

\subsubsection*{Encoder \& Categorical Latent Distribution}
The encoder processes an input image $x \in \R ^{28 \times 28}$, computes a tensor of scores $\vec{s} \in \R^{N \times K}$ -- where $N$ is the number of latent categorical variables, and $K$ is the number of classes for each variable -- and outputs the latent probabilities $q_{\vec{s}}(\vec{C}|x)$. The default convolutional architecture consists of 3 convolutional layers with ReLU activations, followed by 2 fully-connected layers.

Thus, the latent space is defined by $N$ independent categorical random variables, with each variable taking one of $K$ discrete states. The scores $\vec{s} \in \mathbb{R}^{N \times K}$ computed by the encoder are transformed into latent probabilities $q_{\vec{s}}(\vec{C}|x)$ with different parameterizations. We test three schemes for this parameterization: the \textit{softmax} function, the \textit{catnat} parameterization with \textit{sigmoid} activation function and the \textit{catnat} parameterization with \textit{natural} activation function.

\subsubsection*{Decoder}
The decoder takes a set of one-hot latent samples (one vector for each categorical distribution) $\vec{C}$ and processes it through 2 fully-connected layers and 3 transposed convolutional layers to reconstruct an image. A sigmoid activation function in the final layer ensures the output values are bounded within $[0, 1]$. Thus, the decoder $f_\psi(\hat{y}|\vec{C})$ produces a reconstruction whose entries can be interpreted as independent Bernoulli distributions over each pixel.

\subsection{Hyperparameter search}
We perform a two-stage grid search over learning rates. The search procedure is:
\begin{enumerate}
  \item Coarse grid: Test rates in \{0.0003, 0.001, 0.003, 0.01, 0.03\}.
  \item Refined grid: Centered around the best-performing coarse rate (selected by validation loss). The refined grid used for all parameterizations was \{0.0045, 0.0056, 0.0069, 0.0085, 0.01, 0.013, 0.016, 0.02\}.
\end{enumerate}
The final learning rate for each run is the one that yields the lowest validation loss. Using this best learning rate, we train 5 models to compute aggregate statistics.

\subsection{Gumbel-Softmax Reparameterization}
To maintain a differentiable computation graph, we use the Gumbel-Softmax trick \citep{jang2017categorical} to approximate sampling from $q_{\vec{s}}(\vec{C}|x)$. The temperature hyperparameter $\tau$ controls the smoothness of the approximation; as $\tau \to 0$, the samples converge to discrete one-hot vectors. During training, $\tau$ is annealed from an initial value of $1$ to a minimum of $0.5$ using an exponential decay rate of $3 \times 10^{-5}$. In the forward pass, we replace the dense $\vec{C}$ with its hard one-hot version while propagating gradients through the relaxed sample using the Straight-Through estimator \citep{bengio2013estimating}.

\subsection{Loss Function}
The model is trained by maximizing the Evidence Lower Bound (ELBO), which is bounded by the loss $\loss_{\text{ELBO}} = \loss_{\text{recon}} + \loss_{\text{KL}}$.

{\footnotesize{RECONSTRUCTION LOSS}} \hspace{1mm} Given the decoder's output, the reconstruction loss $\loss_{\text{recon}}$ is the binary cross-entropy (BCE) between the input and the output, averaged over the batch:
$$\loss_{\text{recon}} = -\frac{1}{B} \sum_{i=1}^{B} \mathbb{E}_{\vec{C} \sim q_{\vec{s}}(\vec{C}|x_i)}[\log f_\psi(x_i|\vec{C})]$$

{\footnotesize{KL DIVERGENCE}} \hspace{1mm} The term $\loss_{\text{KL}}$ is the Kullback-Leibler divergence between the approximate posterior $q_{\vec{s}}(\vec{C}|x)$ and a fixed prior $p(\vec{C})$. The prior is a set of $N$ independent, uniform categorical distributions, i.e., $p(\vec{C}_n) = \text{Categorical}([\frac{1}{K}, \dots, \frac{1}{K}])$ for each latent variable $n$. The KL divergence is calculated analytically, summed over the $N$ variables, and averaged over the batch:
$$\loss_{\text{KL}} = \frac{1}{B} \sum_{i=1}^{B} D_{KL}(q_{\vec{s}}(\vec{C}_i|x_i) || p(\vec{C})) = \frac{1}{B} \sum_{i=1}^{B} \sum_{n=1}^{N} D_{KL}(q_{\vec{s}}(\vec{C}_{i,n}|x_i) || p(\vec{C}_n))$$

\section{Additional Results: VAE with SGD}\label{app: VAE SGD}

Here we report additional experiments in which the same categorical VAE models are trained using plain Stochastic Gradient Descent (SGD) instead of Adam. As shown in Table \ref{tab: vae results SGD}, the overall performance trends closely mirror those observed with Adam. Across all $(N,K)$ configurations and for both MNIST and Binary MNIST, the proposed \textit{catnat} parameterizations achieve lower test negative log-likelihoods than the \textit{softmax} and \textit{sparsemax} baselines. While absolute performance degrades for all methods under SGD, the relative advantage of \textit{catnat} remains consistent. The two activation variants, $\sigma$ and $\nu$, again perform similarly, with small, configuration-dependent differences. These results support the claim that the benefits of the proposed parameterization are not optimizer-specific and extend to less adaptive, first-order optimization methods.

\begin{table}[!h]
    \centering
    \footnotesize
    \caption{Test set negative log likelihood on the MNIST dataset. Negative log-likelihoods are estimated with $512$ importance samples \citep{burda2016importance}. Models are compared across the number of categorical variables $N$, categories $K$, and categorical parameterizations. \textbf{Bold} numbers indicate the best-performing models (p-value of the Welch’s t-test $< 0.05$). Stochastic Gradient Descent used for optimization.}
    \vspace{2mm}
\begin{tabular}{cccccccc}
\toprule
\multirow{2}{*}{$N$} & \multirow{2}{*}{Param.} & \multicolumn{3}{c}{MNIST} & \multicolumn{3}{c}{Binary MNIST} \\
\cmidrule(lr){3-5} \cmidrule(lr){6-8}
& & $K=8$ & $K=16$ & $K=32$ & $K=8$ & $K=16$ & $K=32$ \\
\midrule
\multirow{4}{*}{10} & \textit{sparsemax} & $102.9 \pm 0.7$ & $102.8 \pm 0.4$ & $107.4 \pm 1.4$ & $87.5 \pm 0.4$ & $86.3 \pm 1.1$ & $90.1 \pm 1.3$ \\
 & \textit{softmax} & $102.3 \pm 0.4$ & $101.5 \pm 0.8$ & $102.4 \pm 0.5$ & $86.7 \pm 0.7$ & $83.4 \pm 0.3$ & $84.9 \pm 2.7$ \\
 & \textit{catnat} $\sigma$ & \gold{100.6 \pm 0.3} & \gold{99.3 \pm 0.9} & \gold{100.1 \pm 1.1} & \gold{84.0 \pm 0.6} & \gold{80.1 \pm 0.2} & \gold{81.0 \pm 0.5} \\
 & \textit{catnat} $\nu$ & \gold{100.6 \pm 0.4} & \gold{99.6 \pm 0.6} & \gold{101.2 \pm 1.4} & \gold{83.8 \pm 0.2} & $81.3 \pm 0.4$ & \gold{80.9 \pm 0.2} \\
\midrule
\multirow{4}{*}{20} & \textit{sparsemax} & $103.5 \pm 1.6$ & $112.9 \pm 2.2$ & $118.0 \pm 2.7$ & $85.3 \pm 1.6$ & $97.2 \pm 1.9$ & $107.7 \pm 3.2$ \\
 & \textit{softmax} & $99.3 \pm 0.3$ & $100.5 \pm 1.1$ & $106.1 \pm 0.8$ & $79.6 \pm 0.5$ & $80.7 \pm 0.1$ & $86.6 \pm 2.9$ \\
 & \textit{catnat} $\sigma$ & \gold{98.6 \pm 0.3} & \gold{99.7 \pm 1.4} & \gold{99.9 \pm 0.5} & \gold{78.7 \pm 0.5} & \gold{79.2 \pm 0.7} & \gold{81.5 \pm 2.0} \\
 & \textit{catnat} $\nu$ & \gold{98.8 \pm 0.5} & \gold{98.8 \pm 0.4} & $100.7 \pm 0.4$ & \gold{79.0 \pm 0.9} & \gold{79.2 \pm 0.8} & \gold{80.9 \pm 0.9} \\
\midrule
\multirow{4}{*}{30} & \textit{sparsemax} & $105.7 \pm 1.0$ & $115.1 \pm 2.0$ & $123.7 \pm 5.0$ & $89.4 \pm 0.6$ & $101.7 \pm 0.2$ & $112.1 \pm 2.0$ \\
 & \textit{softmax} & \gold{99.7 \pm 0.4} & $100.6 \pm 0.8$ & $110.4 \pm 5.7$ & \gold{80.1 \pm 0.8} & $81.3 \pm 0.5$ & $85.1 \pm 1.6$ \\
 & \textit{catnat} $\sigma$ & \gold{99.8 \pm 1.3} & \gold{99.4 \pm 0.5} & \gold{104.8 \pm 2.7} & \gold{79.5 \pm 1.0} & \gold{79.3 \pm 0.6} & \gold{82.9 \pm 1.4} \\
 & \textit{catnat} $\nu$ & \gold{99.5 \pm 0.2} & \gold{100.2 \pm 0.9} & \gold{102.0 \pm 1.0} & \gold{79.7 \pm 1.2} & \gold{80.2 \pm 1.9} & \gold{83.5 \pm 3.0} \\
\bottomrule
\end{tabular}
    \label{tab: vae results SGD}
\end{table}

\section{Additional Results: Different VAE Codebase}
\label{app: vae new codebase}

To further assess whether the empirical gains of the proposed parameterization are robust to implementation details, we performed an additional set of experiments using the discrete VAE codebase of \citet{jeffares2025introduction}.\footnote{The codebase is available at \url{https://github.com/alanjeffares/discreteVAE}.}
Starting from the original implementation, we made a small number of modifications to ensure convergence and to enable model selection. In particular, we trained for 150 epochs during the initial broad hyperparameter search and for 500 epochs in the final runs, since the original training schedule did not lead to full convergence in our setting. We also changed the batch size from 100 to 128 and split the original test set into separate validation and test subsets.

For all parameterizations, we first searched over learning rates in
$\{0.00005, 0.0001, 0.0002, 0.0005, 0.001, 0.002\}$.
Based on validation performance, we then performed a finer search over values close to the best-performing learning rates from the first sweep, namely
$\{0.0001, 0.00012, 0.00015, 0.00018, 0.00022, 0.00027, 0.00033, 0.0004\}$.
The same tuning procedure was used for all parameterizations, and final test results were reported for the configuration with the best validation performance.

The resulting ELBO values are reported in Table~\ref{tab: vae new codebase}. Consistent with the results obtained using our main VAE implementation, the best-performing model is always one of the \textit{catnat} parameterizations. In particular, \textit{catnat} $\sigma$ performs best for smaller latent configurations, while \textit{catnat} $\nu$ becomes competitive or best as the number of categorical variables increases. These results indicate that the advantage of the proposed categorical parameterization is not specific to a single VAE implementation, but persists across codebases and training pipelines.

\begin{table}[t]
\centering
\footnotesize
\caption{ELBO across the number of categorical variables $N$, categories $K$, and categorical parameterizations. \textbf{Bold} numbers indicate the best-performing model.}
\vspace{2mm}
\begin{tabular}{ccccc}
\toprule
\multirow{2}{*}{$N$} & \multirow{2}{*}{Param.} & \multicolumn{3}{c}{$K$} \\
\cmidrule(lr){3-5}
& & $K=8$ & $K=16$ & $K=32$ \\
\midrule
\multirow{3}{*}{10}
& \textit{softmax} & \silv{-95.5 \pm 0.4} & \silv{-88.9 \pm 0.3} & $-83.1 \pm 0.2$ \\
& \textit{catnat} $\sigma$ & \gold{-86.2 \pm 0.5} & \gold{-79.5 \pm 0.5} & \gold{-73.1 \pm 0.3} \\
& \textit{catnat} $\nu$ & $-122.7 \pm 33.4$ & $-103.8 \pm 16.6$ & \silv{-79.7 \pm 8.2} \\
\midrule
\multirow{3}{*}{20}
& \textit{softmax} & $-73.9 \pm 0.4$ & $-61.9 \pm 0.4$ & $-50.4 \pm 0.3$ \\
& \textit{catnat} $\sigma$ & \gold{-66.7 \pm 0.4} & \silv{-53.5 \pm 0.6} & \silv{-41.8 \pm 0.5} \\
& \textit{catnat} $\nu$ & \silv{-67.1 \pm 0.5} & \gold{-53.3 \pm 0.7} & \gold{-41.3 \pm 0.5} \\
\midrule
\multirow{3}{*}{30}
& \textit{softmax} & $-54.0 \pm 0.3$ & $-35.9 \pm 0.1$ & $-18.4 \pm 0.4$ \\
& \textit{catnat} $\sigma$ & \silv{-47.2 \pm 0.5} & \gold{-28.0 \pm 0.7} & \silv{-10.1 \pm 0.6} \\
& \textit{catnat} $\nu$ & \gold{-47.1 \pm 0.7} & \gold{-28.0 \pm 0.7} & \gold{-9.6 \pm 0.4} \\
\bottomrule
\label{tab: vae new codebase}
\end{tabular}
\end{table}

\newpage
\section{Experimental Details: Reinforcement Learning}\label{app: RL}

This appendix summarizes the experimental setup for the Reinforcement Learning experiments, which assess policy learning in discrete-action Atari environments. The implementation is adapted from the high-quality PPO implementation provided by \cite{shengyi2022the37implementation}.

\subsection{Environment}
We use the {Breakout} and Seaquest environments from the Atari Learning Environment \citep{bellemare2013arcade}, accessed via the Gymnasium library \citep{towers2024gymnasium}. The raw game frames undergo a standard preprocessing pipeline using a series of wrappers that, for example:
\begin{itemize}
    \item Convert images to grayscale and resize them to $84 \times 84$ pixels.
    \item Stack 4 consecutive frames to capture temporal dynamics
    \item Clip the rewards to the range $[-1, 1]$ to stabilize training.
\end{itemize}
This setup is standard for benchmarking performance on Atari games \citep{mnih2015human}.

\subsection{Model Architecture}
The agent employs a shared-parameter actor-critic architecture with a convolutional network backbone:
\begin{itemize}
    \item Shared Backbone: The network processes the stacked $4 \times 84 \times 84$ input observations, first normalizing pixel values by dividing by 255.0. It then passes through three convolutional layers with ReLU activations. The network architecture is:
    \begin{enumerate}
        \item 32 filters of size $8 \times 8$ with a stride of 4.
        \item 64 filters of size $4 \times 4$ with a stride of 2.
        \item 64 filters of size $3 \times 3$ with a stride of 1.
    \end{enumerate}
    The output is flattened and passed through a fully-connected layer with 512 units (ReLU activated). All layers are initialized using orthogonal initialization.
    \item Policy and Value Heads: The 512-dimensional latent representation is fed into two separate linear heads:
    \begin{itemize}
        \item The {policy head} (actor) outputs a vector of scores, one for each possible action.
        \item The {value head} (critic) outputs a single scalar estimating the state-value.
    \end{itemize}
\end{itemize}
We test two methods to convert the policy head's scores into action probabilities: the standard \textit{softmax} function and the \textit{catnat} parameterization using the \textit{natural} activation function.

\subsection{Proximal Policy Optimization}
The model is trained using the Proximal Policy Optimization (PPO) algorithm \citep{schulman2017proximal}. PPO is an on-policy algorithm that optimizes a clipped surrogate objective function. The total loss is a combination of the policy loss, the value function loss, and an entropy bonus to encourage exploration:
\[
J(\theta)=\hat{\mathbb{E}}_t\!\left[ L_t^{\mathrm{CLIP}}(\theta) - c_1 L_t^{\mathrm{VF}}(\theta) + c_2 \, H[\pi_\theta](s_t)\right],
\]
where the clipped surrogate is
\[
L_t^{\mathrm{CLIP}}(\theta)=\min\Big(r_t(\theta)\,\hat A_t,\; \mathrm{clip}\big(r_t(\theta),1-\epsilon,1+\epsilon\big)\,\hat A_t\Big),
\]
Here, $r_t(\theta) = \frac{\pi_\theta(a_t|s_t)}{\pi_{\theta_{\text{old}}}(a_t|s_t)}$ is the probability ratio, and $\hat{A}_t$ is the advantage estimate. Advantages are calculated using Generalized Advantage Estimation (GAE) \citep{schulman2015high} with $\gamma = 0.99$ and $\lambda = 0.95$, and are normalized per mini-batch. The value loss is typically
\[
L_t^{\mathrm{VF}}(\theta)=\tfrac{1}{2}\big(V_\theta(s_t)-\hat V_t\big)^2,
\]
and \(H[\pi_\theta](s_t)\) denotes the policy entropy. The clipping \(\epsilon\) and coefficients \(c_1,c_2\) are hyperparameters.

\subsection{Hyperparameter Optimization}
Due to the high computational cost, we performed a targeted hyperparameter search instead of an exhaustive grid search. For each parameterization method and environment, we ran 160 trials using a Tree-structured Parzen Estimator (TPE) sampler \citep{bergstra2011algorithms} to find promising hyperparameter configurations. Table \ref{tab: RL sweep-params} summarizes each parameter's type, sampling range, and any non-default scale or step size. The top 10 configurations identified by this search were then trained with 10 different random seeds to ensure more reliable performance statistics.

\begin{table}[htbp]
  \centering
  \vspace{-1mm}
  \caption{Hyperparameter sweep: types, sampling ranges, and non-default scales/steps.}
  \vspace{2mm}
  \label{tab: RL sweep-params}
  \begin{tabular}{@{} l l l l @{}}
    \toprule
    Parameter & Type & Range & Scale / Step / Notes \\
    \midrule
    \texttt{learning\_rate}   & float & $5.0\times10^{-5}$ -- $1.0\times10^{-2}$ & log scale \\
    \texttt{num\_steps}       & int   & 32 -- 512                             & step = 32 \\
    \texttt{update\_epochs}   & int   & 1 -- 16                               & step = 2 \\
    \texttt{clip\_coef}       & float & 0.01 -- 0.90                          & linear sampling \\
    \texttt{ent\_coef}        & float & 0.0 -- 1.0                            & linear sampling \\
    \texttt{num\_envs}        & int   & 8 -- 16                               & step = 2 \\
    \texttt{num\_minibatches} & int   & 2 -- 16                               & step = 4 \\
    \texttt{max\_grad\_norm}  & float & 0.1 -- 10.0                           & step = 0.1 \\
    \bottomrule
  \end{tabular}
\end{table}

\subsection{Additional Training Parameters}
We trained all models for a total of 8 million timesteps using the Adam optimizer \citep{kingma2015adam} with an $\epsilon$ of $10^{-5}$. The learning rate, identified via hyperparameter search, was linearly annealed to zero over the course of training.

\end{document}